\definecolor{menucolor}{rgb}{0.1,0.52,0.47}
\definecolor{anchorcolor}{rgb}{0.85,0.37,0.01}
\definecolor{runcolor}{rgb}{0.46,0.44,0.701}
\definecolor{linkcolor}{rgb}{0.3,0.55,0.01}
\definecolor{urlcolor}{rgb}{0.12,0.47,0.70}
\definecolor{citecolor}{rgb}{0.55,0.36,0.01}
\definecolor{filecolor}{rgb}{0.4,0.4,0.4}
\definecolor{cbp1}{RGB}{166,206,227}
\definecolor{cbp2}{RGB}{31,120,180}
\definecolor{cbp3}{RGB}{253,191,111}
\definecolor{cbp4}{RGB}{255,127,0}
\definecolor{dfp1}{RGB}{27,158,119}
\definecolor{dfp2}{RGB}{166,206,227}
\definecolor{dfp6}{RGB}{217,95,2}
\definecolor{dfp4}{RGB}{230,171,2}
\definecolor{dfp5}{RGB}{117,112,179}
\definecolor{dfp3}{RGB}{231,41,138}
\definecolor{dfp7}{RGB}{102,166,30}
\definecolor{dfp8}{RGB}{166,118,29}
\definecolor{dfp9}{RGB}{102,102,102}
\definecolor{scp1}{RGB}{166,206,227}
\definecolor{scp3}{RGB}{178,223,138}
\definecolor{scp4}{RGB}{51,160,44}
\definecolor{scp5}{RGB}{251,154,153}
\definecolor{scp6}{RGB}{227,26,28}
\definecolor{scp11}{RGB}{255,255,153}
\definecolor{dkp1}{RGB}{173,216,230}
\definecolor{dkp2}{RGB}{139,0,0}
\definecolor{gtex1}{RGB}{166,206,227}
\definecolor{gtex2}{RGB}{31,120,180}
\definecolor{gtex3}{RGB}{105,105,105}
\definecolor{gtex4}{RGB}{51,160,44}
\definecolor{gtex5}{RGB}{251,154,153}
\definecolor{gtex6}{RGB}{227,26,28}
\definecolor{gtex7}{RGB}{106,61,154}
\definecolor{gtex8}{RGB}{255,255,0}
\definecolor{gtex9}{RGB}{202,178,214}
\definecolor{gtex10}{RGB}{106,61,154}
\definecolor{indic1}{RGB}{255,   255,    0}
\definecolor{indic2}{RGB}{31,   120,  180}
\definecolor{indic3}{RGB}{177,    89,  40}
\definecolor{indic4}{RGB}{51,   160,   44}
\definecolor{indic5}{RGB}{160,  32,  240}
\definecolor{indic6}{RGB}{227,   26,   28}
\definecolor{indic7}{RGB}{253,   191,   111}
\definecolor{indic12}{RGB}{177,   89,   40}
\definecolor{zip1}{RGB}{27,  158,  119}
\definecolor{zip2}{RGB}{255,   127,   0}
\definecolor{zip3}{RGB}{153,    50,  204}
\definecolor{zip4}{RGB}{255,   255,    0}
\definecolor{zip5}{RGB}{124,   252,    0}
\definecolor{zip6}{RGB}{24,   116, 205}
\definecolor{zip7}{RGB}{166,   118,   29}
\definecolor{zip8}{RGB}{46,    46,   46}
\definecolor{zip9}{RGB}{255,    48,   48}
\definecolor{zip10}{RGB}{255,   192,  203}
\newcommand{\citep}{\cite}
\newcommand{\citet}{\cite}
\newcommand{\citeyear}{\cite}
\newcommand{\citeauthor}{\cite}
\newtheorem{thm}{Theorem}
\newtheorem{defn}[thm]{Definition}
\newtheorem{rem}[thm]{Remark}
\newtheorem{res}[thm]{Result}
\newcolumntype{L}{>{\raggedright\arraybackslash}X}
\definecolor{ggplot1}{HTML}{E495A5}
\definecolor{ggplot2}{HTML}{BDAB66}
\definecolor{ggplot3}{HTML}{65BC8C}
\definecolor{ggplot4}{HTML}{55B8D0}
\definecolor{ggplot5}{HTML}{C29DDE}
\newcommand{\abs}[1]{|#1|}
\newcommand{\norm}[1]{\Vert#1\Vert}
\newcommand{\bLambda}{\boldsymbol{\Lambda}}
\newcommand{\bPhi}{\boldsymbol{\Phi}}
\newcommand{\bXi}{\boldsymbol{\Xi}}
\newcommand{\bGamma}{\boldsymbol{\Gamma}}
\newcommand{\bSigma}{\boldsymbol{\Sigma}}
\newcommand{\bPsi}{\boldsymbol{\Psi}}
\newcommand{\bOmega}{\boldsymbol{\Omega}}
\newcommand{\bB}{\boldsymbol{B}}
\newcommand{\bV}{\boldsymbol{V}}
\newcommand{\bI}{\boldsymbol{I}}
\newcommand{\bT}{\boldsymbol{T}}
\newcommand{\be}{\boldsymbol{e}}
\newcommand{\bP}{\boldsymbol{P}}
\newcommand{\bQ}{\boldsymbol{Q}}
\newcommand{\bW}{\boldsymbol{W}}
\newcommand{\bu}{\boldsymbol{u}}
\newcommand{\bv}{\boldsymbol{v}}
\newcommand{\bU}{\boldsymbol{U}}
\newcommand{\bx}{\boldsymbol{x}}
\newcommand{\bX}{\boldsymbol{X}}
\newcommand{\bY}{\boldsymbol{Y}}
\newcommand{\bw}{\boldsymbol{w}}
\newcommand{\bzero}{\boldsymbol{0}}
\newcommand{\bone}{\boldsymbol{1}}
\newcommand{\mR}{\mathcal R}
\newcommand{\mN}{\mathcal N}
\DeclareMathOperator*{\argmin}{argmin}
\DeclareMathOperator*{\trace}{trace}
\newcommand{\R}{\mathbb{R}}
\newcommand{\mS}{\mathbb{S}}
\newcommand{\B}{\mathbb{B}}
\newcommand{\mP}{\mathbb{P}}
\newcommand{\Var}{\mathbb{V}\mbox{ar}}
\newcommand{\Cor}{\mathbb{C}\mbox{or}}
\newcommand{\ben}{\begin{enumerate}}
\newcommand{\een}{\end{enumerate}}
\newcommand\code{\bgroup\@makeother\_\@makeother\~\@makeother\$\@codex}
\def\@codex#1{{\normalfont\ttfamily\hyphenchar\font=-1 #1}\egroup}
\begin{document}
%
\title{Visualization of Labeled Mixed-featured Datasets}

\author{{Yifan~Zhu, Fan~Dai
     and Ranjan~Maitra}
\thanks{Y. Zhu and R. Maitra are with the Department of Statistics
at Iowa State University, Ames, Iowa 50011, USA. e-mail:
\{yifanzhu,maitra\}@iastate.edu.}
\thanks{F. Dai is with the Department of Mathematical Sciences at the Michigan Technological University, Houghton,
  Michigan 49931, USA. e-mail: fand@mtu.edu.}
 }

\maketitle
 \begin{abstract}
We develop methodology for visualization of labeled mixed-featured
datasets. We first investigate datasets with continuous features where
our Max-Ratio  
Projection (MRP) method utilizes the group information in high
dimensions to provide distinctive lower-dimensional projections that
are then displayed using Radviz3D. Our methodology is extended to
datasets with discrete and continuous features where a Gaussianized
distributional transform is used in conjunction with copula models
before applying MRP and visualizing the result using RadViz3D. A R
package {\tt radviz3d} implementing our complete methodology is available. 
\end{abstract}

%
 \begin{IEEEkeywords}
 copula models, generalized distributional transform, Indic scripts, principal components, RNA sequences, SVD
 \end{IEEEkeywords}

%
\vspace{-0.2em}
\section{Introduction}
\label{sec:intro}
%
%
%
%
\IEEEPARstart{M}{odern} applications 
often yield datasets of many dimensions and complexity. Visualizing
such data is important to gain insight into their properties and the
similarity or distinctiveness of different
groups~\cite{cardetal99}. However, effective visualization can be
challenging because the observations need to be mapped  to a
lower-dimensional space, with the reduced display conveying
information on the characteristics as faithfully  as possible. Such displays become even more difficult with mixed-features data, that is, when some of the attributes in the dataset are discrete. 

Our major objective in this paper is to visualize high-dimensional datasets with mixed
features. 
Section~\ref{subsec:highd} first develops a Max-Ratio Projection (MRP)
method that linearly projects a labeled continuous-features dataset into a
lower-dimensional space to allow  for its effective visualization via RadViz3D
while preserving its group-specific distinctiveness and variability.
Our methodology is then extended to provide novel displays of datasets
that also have discrete features. Specifically, we use  the
Gaussianized Distributional Transform (GDT) with copula
models to render mixed-features datasets to the continuous space, after which MRP and RadViz3D can
be used.

Continuous multivariate data are    displayed in many ways~\cite{bertinietal11}
({\em e.g.} starplots~\cite{chambersetal83}, Chernoff 
faces~\cite{chernoff73}, parallel coordinate
plots~\citep{inselberg85,wegman90}, surveyplots~\cite{fayyadetal01}, Andrews'
curves~\cite{andrews72,khattreeandnaik02}, biplots~\cite{gabriel71}, star coordinate
plots~\cite{kandogan01}
, Uniform Manifold Approximation and Projections (UMAP)~\cite{mcinnesetal18}). Our paper applies
radial visualization or
RadViz~\cite{hoffmanetal97,hoffmanetal99,grinsteinetal01,draperetal09}
that projects data onto a circle using Hooke's law. Here, 
$p$-dimensional observations are projected onto the 2D plane using $p$ anchor
points equally arranged to be on the perimeter of a circle. This
representation places each observation at the center of the circle
that is then pulled by springs in the directions of the $p$ anchor
points while being balanced by forces relative to the coordinate
values. Observations with similar relative values across all
attributes are then placed close to the center while the others are
placed closer to anchor points corresponding to the coordinates with
higher relative values.  However, there is loss of
information~\cite{arteroanddeoliveira04} in RadViz which maps a
$p$-dimensional point to 2D.
This loss worsens with increasing $p$, but can be
alleviated by extending it to 3D \cite{zhu2021fully}.



Beyond Section~\ref{sec:method}, the paper is organized as
follows. Section~\ref{sec:illustration} illustrates the ability of 
our methodology to faithfully display labeled data of different
separations. Section~\ref{sec:app}  
illustrates our methodology on several high-dimensional datasets with
mixed features. The main paper concludes with some discussion in
Section~\ref{sec:conclusion}. 
We also have supplementary materials that discuss available methods
for displaying high-dimensional datasets, and an online resource at
\url{https://fanne-stat.github.io/RadViz3DExperiments/index.html} that 
allows for the reader to visualize displays in 3D. 
Items in the online resource are referenced in this paper with the prefix ``S''. 
\section{Methodology}
\label{sec:method}
\subsection{Visualizing  Discrete- and Mixed-Feature Datasets}
\label{subsec:discrete}
Datasets with discrete features are complicated
to visualize, but arise in genomics, survey and
voting preferences and other applications. We develop visualization
methods by transforming these datasets using copulas specifically
constructed to describe the correlation structure among the discrete
variables in the joint distribution while maintaining the empirical marginal
distribution. 
After transformation, we apply multivariate visualization methods. We transform mixed-feature datasets using copulas,
for which we introduce the generalized distributional transformation.  

\begin{defn}[Generalized Distributional Transform,
  \cite{ruschendorf13}, Chapter 1]\label{def1} 
 	Let $Y$ be a real-valued random variable (RV) with cumulative distribution
        function (CDF)
        $F(\cdot)$ and let $V$ be a RV independent of $Y$, such
        that $V \sim \mathrm{Uniform}(0,1)$. The generalized
        distributional transform of $Y$ is  	$U = F(Y, V)$
where 
$F(y, \lambda) \doteq P(Y < y) + \lambda P(Y = y) = F(y-) + \lambda\{F(y) - F(y-)\}$
is  	the generalized CDF of $Y$, and $F(y-)$ is the left limit of $F(\cdot)$ at $y$.
	\end{defn}

	\begin{thm}[\cite{ruschendorf13}, Chapter 1]\label{distributional_t}
	Let $U = F(Y, V)$ be the distributional transform of $Y$ as per
        Definition~\ref{def1}. Then
	\[U \sim \mathrm{Uniform}(0,1) \, \, \mathrm{and}\,\, Y = F^{-1}(U) \, \mathrm{a.s.} \] 
	where $F^{-1}(t) = \inf\{y \in \mR : F(y) \geq t\}$ 
	is the generalized inverse, or the quantile transform, of
        $F(\cdot)$.
	\end{thm}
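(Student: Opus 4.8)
The plan is to establish the two assertions separately: first that $U$ is uniformly distributed, then that the quantile transform recovers $Y$ almost surely. Throughout I would write $\Delta(y) = F(y) - F(y-) = P(Y=y)$ for the jump of $F$ at $y$, so that $U = F(Y-) + V\Delta(Y)$, and I would repeatedly use that $V\sim\mathrm{Uniform}(0,1)$ is independent of $Y$.

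For uniformity, I would fix $t\in(0,1)$ and show $P(U\le t)=t$. Setting $q = F^{-1}(t)=\inf\{y: F(y)\ge t\}$, monotonicity and right-continuity of $F$ supply the two structural facts on which everything rests: $F(y)<t$ for every $y<q$, and $F(q-)\le t\le F(q)$. I then partition the sample space into $\{Y<q\}$, $\{Y=q\}$, $\{Y>q\}$ and evaluate the event $\{U\le t\}$ on each piece. On $\{Y<q\}$ one has $U\le F(Y-)+\Delta(Y)=F(Y)<t$, so this whole set is captured, contributing $P(Y<q)=F(q-)$. On $\{Y>q\}$ one has $U\ge F(Y-)\ge t$, so $\{U\le t\}$ reduces to $\{U=t\}$ there, which is $P$-null (by the flat-stretch reasoning described below). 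On $\{Y=q\}$, independence makes $U$ conditionally uniform on $[F(q-),F(q)]$, whence $P(U\le t, Y=q) = \big((t-F(q-))/\Delta(q)\big)\cdot\Delta(q) = t-F(q-)$, and this is consistently $0$ when $q$ is a continuity point. Summing the three contributions gives $F(q-) + (t-F(q-)) + 0 = t$, as required.

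For the identity $Y = F^{-1}(U)$, I would prove the two inequalities $F^{-1}(U)\le Y$ and $F^{-1}(U)\ge Y$, the first surely and the second almost surely. Since $U\le F(Y)$ always, $Y$ belongs to $\{z: F(z)\ge U\}$, so $F^{-1}(U)\le Y$ holds deterministically. For the reverse inequality I must show that $F(z)<U$ for every $z<Y$; because $F(z)\le F(Y-)\le U$, this is strict whenever $V\Delta(Y)>0$, that is, whenever $Y$ is an atom and $V>0$, which already forces $F^{-1}(U)=Y$ on that set.

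The only delicate step is to verify that the complementary event is $P$-null. That residual set is contained in $\{V=0\}$ together with the continuity-point situation where $U=F(Y)$ but $F$ is flat immediately to the left of $Y$. The first piece has probability zero since $V$ is continuous. For the second, I would argue that such a point $Y$ lies in the closure of a maximal interval on which $F$ is constant; there are at most countably many such intervals, each carries zero $Y$-mass by the very definition of flatness, and each relevant endpoint is a continuity point with $\Delta=0$, so the countable union still has $Y$-probability zero. I expect this flat-stretch bookkeeping to be the main obstacle: the probability integral transform is otherwise routine on the continuous part, and the randomization by $V$ dispatches the atoms cleanly, so the genuine care lies entirely in confirming that the exceptional sets coming from plateaus of $F$ and from $\{V=0\}$ are truly negligible.
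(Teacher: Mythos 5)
The paper offers no proof of this theorem: it is quoted verbatim from R\"uschendorf (2013, Chapter~1) and used as an imported tool for constructing the Gaussianized distributional transform, so there is no in-paper argument to compare yours against. Judged on its own, your proof is correct and complete. For uniformity, the three-way partition by $q=F^{-1}(t)$ is sound: the facts $F(y)<t$ for $y<q$ and $F(q-)\le t\le F(q)$ follow from the definition of the generalized inverse and right-continuity, the contribution $F(q-)$ from $\{Y<q\}$ and $t-F(q-)$ from $\{Y=q\}$ (via the conditional uniformity of $F(q-)+V\Delta(q)$ on $[F(q-),F(q)]$, with the degenerate case $\Delta(q)=0$ handled consistently) are right, and the residual event $\{U=t,\,Y>q\}$ does force $F(Y-)=F(q)=t$ and $V\Delta(Y)=0$, which your flat-stretch argument correctly kills. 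For $Y=F^{-1}(U)$ a.s., the deterministic inequality $F^{-1}(U)\le Y$ from $U\le F(Y)$, the automatic equality on atoms with $V>0$, and the disposal of the exceptional set as a countable union of plateaus of $F$ each carrying zero $Y$-mass (with jump endpoints already covered by the atom case and $\{V=0\}$ being null) together close the argument. The one place to be slightly more explicit if you write this up formally is the countability of the maximal flat intervals (each nondegenerate one contains a distinct rational), but that is standard and you clearly have it in mind.
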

	
	Suppose that $\bY_1,\bY_2,\ldots,\bY_n$ is a sample of
        discrete-valued random vectors, each of which has the same
        distribution as $\bY_1 = (Y_{11}, Y_{12}, \ldots, Y_{1p})$, where each margin $Y_{1i}$ has CDF $F_i(\cdot)$ (a step function).
	Let $U_i = F(Y_{1i}, V_i)$. From Theorem
        \ref{distributional_t}, $U_i \sim \mathrm{Uniform}(0,1)$,  so
        $(U_1, U_2, \ldots, U_p) \sim C$ is a copula. Further, from the
        definition of a quantile transform and  Theorem
        \ref{distributional_t}, the joint distribution for $\bY_1$ can
        be specified in terms of $F_i(\cdot)$s and the
        constructed copula $C$ as
	\begin{align*}
	    F(y_1, y_2,\ldots, y_p) &= \mP(Y_{11} \leq y_1, Y_{12}\leq y_2,\ldots, Y_{1p} \leq y_p)\\
	    & = \mP[F_i^{-1}(U_i)\leq y_i\;\forall i=1,2,\ldots,p]\\
	    & = \mP[U_i \leq F_i(y_i)\; \forall i=1,2,\ldots,p]\\
	    & = C[F_1(y_1), F_2(y_2), \ldots, F_p(y_p)]. 
	\end{align*}
        Now, we pick $p$ continuous marginal distributions, each with CDF
	$\tilde{F}_i(\cdot),i=1,2,\ldots,p$. Then $(\tilde{F}_1^{-1}(U_1),
	\tilde{F}_2^{-1}(U_2), \ldots, \tilde{F}_p^{-1}(U_p))$ has a continuous
	joint distribution with marginals $\tilde{F}_i(\cdot),\, i = 1, 2, \ldots, p$,
	and the copula associated with this joint distribution is also $C$.

        We use the marginal empirical CDF (ECDF) $\hat
             F_i(\cdot)$ of the  $\bY_j$s to estimate $F_i(\cdot)$ for
             $i=1,2,\ldots,p$. We use $\mN(0,1)$ as the continuous
             marginals, i.e. $\tilde{F}_i(\cdot) = \Phi(\cdot)$, the
             $N(0,1)$ CDF. We  define the Gaussianized distributional
             transform (GDT)
             \begin{equation}\label{Gfunction}
               G( \bY_j, \bV_j) \doteq [[\Phi^{-1}(\hat{F}_i(Y_{ji}, V_{ji}))]]_{i=1,2,\ldots,p}
      \end{equation}
       for $j=1,2,\ldots,n$. Here $\bV_j = (V_{j1},
      V_{j2},\ldots,V_{jp})$, and $V_{ji}$s are independent identically
      distributed  standard uniform realizations.
      Then $\bX_i = G(\bY_i, \bV_i),\, i = 1, 2, \ldots, n$ are
      realizations from a distribution on $\mR^p$:
      we apply the methods of Section~\ref{subsec:highd}  on
      $\bX_1,\bX_2,\ldots,\bX_n$ before visualizing the resulting MRPs
      using RadViz3D.
      \begin{rem} 
        \label{rem:gdt}
We make a few comments on our use of the GDT:
        \begin{enumerate}
        \item 
          \label{rem:cdf}
          For a continuous random variable, Theorem~\ref{distributional_t} reduces to the
          usual CDF so  $G(\cdot,\cdot)$ can be applied also to
          datasets with mixed (continuous and discrete) features.
            \item 
              \label{rem:std}
The GDT is a more stringent standardization than 
the usual affine transformation that only sets a dataset to have zero
mean and unit variance, because it transforms the 
marginal ECDFs to $\bPhi(\cdot)$. So the GDT may,
also be applied to datasets with skewed continuous features.
\item When datasets have discrete features with little
              class-discriminating ability, applying the
              GDT on such features will inflate the variance
              in the transformed space, resulting in a standard normal
              coordinate that is independent of the
              other features. When the number of such coordinates is
              substantial relative to group-discriminating features,
              these independent $\mN(0,1)$-transformed coordinates will
              drive the MRP, resulting in poor separation. So we use an 
              analysis of variance (ANOVA) test on each  
              copula-transformed coordinate to ascertain if it contains
              significant group-discriminating information. Multiple significance issues are addressed by correcting for false
              discoveries~\citep{benjaminiandhochberg95}. Features so 
              ascertained to not have significant disciminating
              information are 
              dropped from the MRP and subsequent steps.
            \end{enumerate}
      \end{rem}

\subsection{Visualizing High-dimensional Datasets}
\label{subsec:highd}
With the machinery for GDT in place, we  now investigate methods for summarizing labeled high-dimensional data with mixed features.
For even moderate dimensions ($p\!>\!10$), displaying many coordinates is not helpful even after 
factoring in the benefits of going to 3D. 
So we project our high-dimensional
datasets into a lower-dimensional space such that the projected
coordinates are almost uncorrelated.  A common approach to finding
uncorrelated projections is Principal Components Analysis (PCA) that
finds the mutually orthogonal projections 
summarizing a proportion of the total variance in the data. We propose
using PCA for unlabeled data. For labeled data, approaches beyond PCA
that exploit  class labels~\citep{korenandcarmel04} are desired. Our
objective is to find an approach that preserves the distinctiveness
of group labels when finding projections while also preserving, in
the display, the inherent variability in the dataset. 
We develop Max-Ratio Projections (MRPs) of the data that maximizes separation
between groups (in projected space) relative to its total variability. 
We discuss obtaining these projections next.
\paragraph{\bf Directions that Maximize Between-Group
  Variance}
Given a labeled dataset, we find a linear subspace such that the
groups are well-separated when the data are projected along  this
subspace. Let $\bv_1, \bv_2,\ldots, 
\bv_{k}$ be $k$ uncorrelated direction vectors spanning the linear
subspace. In order to separate the groups, we want to project the 
data to each $\bv_j$ such that the ratio of the projected between-group
sum of squares and the total corrected sum of squares is maximized
(equivalently, the ratio of the projected within-group sums of squares
and the total corrected sum of squares is minimized). 

Let $\bXi=\{\bX_1,\bX_2,\ldots,\bX_n\}$ be $n$ $p$-dimensional observation
vectors. Then the corrected total sum of squares and cross-products
(SSCP) matrix is  $\bT = (n-1)\hat\bSigma$ where
$\hat\bSigma$ is the sample dispersion matrix of $\bXi$. Let
$\Var(\bX_i) = \bSigma$, then for any 
projection vector $\bv_j$, we have $\Var(\bv_j'\bX_i) =
\bv_j'\bSigma\bv_j$. Further, for any two $\bv_j$ and $\bv_l$,
$\Cor(\bv_j'\bX_i,\bv_l'\bX_i) \propto \bv_j'\bSigma\bv_l = 0$ since
the direction vectors decorrelate the observed coordinates. (We may 
replace $\bSigma$ with $\hat\bSigma$ in the expressions above.)  Therefore,
we obtain $\bv_1,\bv_2,\ldots,\bv_k$ in sequence to satisfy
\begin{equation}
\begin{split}
    &\max_{\bv_1} \frac{SS_{group}(\bv_1)}{SS_{total}(\bv_1)}\\
    &\max_{\bv_j} \frac{SS_{group}(\bv_j)}{SS_{total}(\bv_j)} \quad
    \ni\quad \bv_j' \bT \bv_i = 0,\, 1\leq i < j\leq k
\end{split}
\label{eq:seq.v}
\end{equation}
where $SS_{total}(\bv_l)$ is the corrected total sum of squares of the data
projected along $\bv_l$ (so is a scalar quantity), and
$SS_{group}(\bv_l)$ is the corrected between-group sum of squares of the data
projected along $\bv_l$.  
Equivalently, if $SS_{within}(\bv_l)$ is the corrected within-group
sum of squares of data projected along $\bv_l$, we know  
$SS_{total}(\bv_l) = SS_{group}(\bv_l) + SS_{within}(\bv_l).$
Therefore the ratio can be written as
\[\frac{SS_{within}(\bv_l)}{SS_{total}(\bv_l)}
  =1 - \frac{SS_{group}(\bv_l)}{SS_{total}(\bv_l)}.\]
Then the equivalent form of the optimization problem is 
\begin{equation}
\begin{split}
    &\min_{\bv_1} \frac{SS_{within}(\bv_1)}{SS_{total}(\bv_1)}\\
    &\min_{\bv_j} \frac{SS_{within}(\bv_j)}{SS_{total}(\bv_j)} \quad
    \ni\quad \bv_j' \bT \bv_i = 0,\, 1\leq i < j\leq k.
\end{split}
\label{eq:min.seq.v}
\end{equation}
\begin{thm} {\em Max-Ratio Projections}. Let
  $\bX_1,\bX_2,\ldots,\bX_n$ be $p$-dimensional observations from $G$
  groups. Let $\bT$ be   the total corrected SSCP and $\bB$ be the
  corrected SSCP between groups. Let $\bT$ and $\bB$  both be
  positive definite. Then
  \begin{equation}
    \hat\bv_j = \frac{\bT^{-\frac12}\hat\bw_j}{\norm{\bT^{-\frac12}\hat\bw_j}},
    \qquad j=1,2,\ldots,k
    \label{eq:vj}
  \end{equation}
satisfies \eqref{eq:seq.v}  where $\hat\bw_j, j = 1, 2,\ldots,k$ are the eigenvectors corresponding to the $k$ largest eigenvalues of $\bT^{-1/2}\bB\bT^{-1/2}$.
\label{res:mrp}
 \end{thm}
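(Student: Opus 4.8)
The plan is to recognize each optimization in \eqref{eq:seq.v} as a generalized Rayleigh quotient and reduce it, via a symmetric change of variables, to a standard symmetric eigenproblem, after which the sequential Courant--Fischer characterization delivers the eigenvectors of $\bM := \bT^{-1/2}\bB\bT^{-1/2}$ in order. First I would record the two quadratic identities $SS_{group}(\bv)=\bv'\bB\bv$ and $SS_{total}(\bv)=\bv'\bT\bv$, which follow directly from the definitions of the between-group and total SSCP matrices: forming the corrected between-group (resp.\ total) sum of squares of the data projected along $\bv$ is exactly the quadratic form of $\bB$ (resp.\ $\bT$) evaluated at $\bv$. Each problem in \eqref{eq:seq.v} then reads
\[
\max_{\bv_j}\ \frac{\bv_j'\bB\bv_j}{\bv_j'\bT\bv_j}\quad\text{subject to}\quad \bv_j'\bT\bv_i=0,\ 1\le i<j,
\]
a ratio invariant under rescaling $\bv_j$, so the unit-norm normalization in \eqref{eq:vj} costs nothing and may be imposed at the end.

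Next I would substitute $\bw_j=\bT^{1/2}\bv_j$, which is legitimate because $\bT$ is positive definite, so the symmetric square root $\bT^{1/2}$ and its inverse exist. Under this substitution the objective becomes the ordinary Rayleigh quotient $\bw_j'\bM\bw_j/\bw_j'\bw_j$ for the symmetric matrix $\bM=\bT^{-1/2}\bB\bT^{-1/2}$, and---crucially---the $\bT$-orthogonality constraint transforms cleanly into Euclidean orthogonality, since $\bv_j'\bT\bv_i=(\bT^{-1/2}\bw_j)'\bT(\bT^{-1/2}\bw_i)=\bw_j'\bw_i$. Thus the entire sequence of constrained problems becomes: maximize $\bw'\bM\bw/\bw'\bw$ over unit vectors orthogonal to $\bw_1,\ldots,\bw_{j-1}$. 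Note also that $\bM$ is congruent to $\bB$ (as $\bT^{-1/2}$ is invertible and symmetric), so $\bM$ is symmetric positive definite and admits an orthonormal eigenbasis with positive eigenvalues.

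Finally I would invoke the standard deflation argument for symmetric matrices: the unconstrained maximizer of the Rayleigh quotient of $\bM$ is its top eigenvector, and maximizing subject to orthogonality to the first $j-1$ eigenvectors yields the eigenvector $\hat\bw_j$ associated with the $j$th largest eigenvalue. Transforming back gives $\hat\bv_j\propto\bT^{-1/2}\hat\bw_j$; normalizing to unit Euclidean length produces exactly \eqref{eq:vj}, and the translated orthogonality $\bw_j'\bw_i=0$ recovers the required $\bv_j'\bT\bv_i=0$.

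The only point demanding care is this sequential step---establishing that the optimal $\bw_j$ at stage $j$ may be taken to be the $j$th eigenvector. This is the heart of the argument, but it is precisely the classical induction underlying principal component analysis: assuming $\hat\bw_1,\ldots,\hat\bw_{j-1}$ are the leading eigenvectors, any admissible unit $\bw$ expands in the remaining eigenvectors, whereupon the Rayleigh quotient is a convex combination of the remaining eigenvalues and is maximized by placing all weight on the largest of them, namely on $\hat\bw_j$. I would state this induction explicitly, since everything else in the proof is routine once the congruence reduction to $\bM$ is in place.
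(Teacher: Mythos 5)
Your proof is correct and follows essentially the same route as the paper's: the quadratic-form identities $SS_{group}(\bv)=\bv'\bB\bv$ and $SS_{total}(\bv)=\bv'\bT\bv$, the change of variables $\bw_j=\bT^{1/2}\bv_j$ turning $\bT$-orthogonality into Euclidean orthogonality, and the sequential Rayleigh-quotient maximization for $\bT^{-1/2}\bB\bT^{-1/2}$. The only substantive difference is that the paper observes this matrix is merely nonnegative definite with at most $G-1$ positive eigenvalues (hence $k\leq G-1$), whereas you assert positive definiteness from the theorem's hypothesis on $\bB$; neither choice affects the deflation argument.
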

 \begin{proof}
 Let $\bGamma_g,g=1,2,\ldots,G$ be the $n_g\times n$ matrix that
 selects observations from the matrix $\bX$ that has $\bX_i$ as its $i$th row. Here $n_g$ is the number
 of observations from the $g$th group, for $g=1,2,\ldots,G$. Then
 $\bGamma_g\bX$ is the matrix with observations
 from the $g$th group in its rows and
 \begin{equation}
   \bB = \bX'\left(\sum_{g=1}^G\frac1{n_g}\bGamma_g'\bone_{n_g}\bone_{n_g}'\bGamma_g
   -\frac1n\bone_n\bone_n'\right)\bX
 \label{eq:B}
 \end{equation}
 and $\bT = \bX'(\bI_n -\bone_n\bone_n'/n)\bX$. Also,
 $\bX$ projected along any direction $\bv$ yields $SS_{group}(\bv) =
 \bv'\bB\bv$ and $SS_{total}(\bv) = \bv'\bT\bv$ so that 
 finding~\eqref{eq:seq.v} is equivalent to 
 \begin{equation}
 \begin{split}
   &\max_{\bv_1} \frac{ \bv_1'\bB\bv_1}{\bv_1'\bT\bv_1} \\
   &\max_{\bv_j} \frac{  \bv_j'\bB\bv_j}{\bv_j'\bT\bv_j}  \quad \bv_j'
   \bT \bv_i = 0,\,  1\leq i<j \leq k.
 \end{split}
 \label{eq:seq.v.new}
 \end{equation}
 Let $\bw_j = \bT^{1/2} \bv_j,\, j = 1,2, \ldots, k$. Then for each $j$,
 \begin{equation*}
 \frac{SS_{group}(\bv_j)}{SS_{total}(\bv_j)} = \frac{\bv_j' \bB \bv_j}{\bv_j'\bT \bv_j} = \frac{\bw_j' T^{-\frac12}B T^{-\frac12} \bw_j}{\bw_j' \bw_j}
 \end{equation*}
 	and $\bv_j'\bT\bv_i = \bw_j' \bw_i $. Then, instead
        of~\eqref{eq:seq.v.new}, we can sequentially solve the
        following, with respect to $\bw_1,\bw_2, \ldots, \bw_k$: 
 	\begin{equation}
 \begin{split}
     &\max_{\bw_1} \frac{\bw_1' \bT^{-\frac12}\bB\bT^{-\frac12} \bw_1}{\bw_1'\bw_1}\\
    &\max_{\bw_j} \frac{\bw_j' \bT^{-\frac12}\bB\bT^{-\frac12}
       \bw_j}{\bw_j' \bw_j} \; \ni \; \bw_j' \bw_i = 0,\, 1\leq
     i < j \leq k. 
 \end{split}
 \label{eq:svd}
 \end{equation}
 $\bT^{-\frac12}\bB\bT^{-\frac12}$ is nonnegative definite, with at
 most $G-1$ positive eigenvalues, so $k\leq G-1$ in
 \eqref{eq:svd}. Its eigenvectors $\hat\bw_1,\hat\bw_2, \ldots, \hat\bw_k$ 
(corresponding to its $k$ largest eigenvalues in decreasing order) solve 
 \eqref{eq:svd}. Let 
 $\hat\bv_j$ be the normalized version of $\bT^{-\frac12}\hat\bw_j$. Then
 $\hat\bv_j$s satisfy \eqref{eq:seq.v.new}. The theorem follows.
 \end{proof}
Theorem~\ref{res:mrp} provides the projections that maximize
separation between the groups in a lower-dimensional space in a
way that also decorrelates the coordinates. The number of
projections is limited by $G-1$. So, for $G\leq 3$, 1 to 2 projections and
therefore 1D or 2D displays should be enough. (For $G=3$, a 
RadViz2D figure should normally suffice, but, as we show later in
our examples, choosing 4 projections yields a better display even
though the additional $4-G+1$ projections yield no additional
information on group separation. We use springs to
provide a physical interpretation for why these additional 
$4-G+1$ coordinates are beneficial. The first $G-1$ MRP coordinates
pull the data with different forces along the corresponding anchor
points in a way that permits maximum separation of the classes. The 
remaining $4-G+1$ anchor points correspond to the zero eigenvalues and
do not contribute to the separation between groups, and so each
group is pulled with equal force in the direction of these anchor
points. These additional pulls separate the groups 
better in RadViz3D than in RadViz2D. (We choose 4 MRPs when $G\leq4$
for RadViz3D because a 3D sphere is best separated using 4
equi-spaced anchor points because every axis is then equidistant
to the other.)

The eigenvalue decomposition of $\bT^{-\frac12}\bB\bT^{-\frac12}$ assumes a
positive definite $\bT$, for which a sufficient condition is that $n_g
> p\mbox{ for all } g$. This assumption may not always
hold so we now propose to reduce the dimensionality of the dataset for
the cases where $p \geq \min_g n_g$ while also preserving as far as possible its
group-specific features and variability.
\paragraph{\bf Nearest Projection Matrix to Group-Specific Principal Components (PCs)}
\label{sec:dir.MRP}
Our approach builds on standard
PCA whose goal, it may be recalled, is to project a dataset onto a
lower-dimensional subspace in a way that captures most of its total
variance. We use 
projections that summarize the variability within each
group. So, we summarize each group by obtaining PCs separately and then finding the closest projection
matrix to all the group-specific PCs. Specifically, we have the following
\begin{res}
\label{res3}
Suppose that $\bV_1, \bV_2,\ldots, \bV_m$ are $p\times q$ matrices
with $\bV_j'\bV_j = \bI_q$, where $\bI_q$ is the $q\times q$ identity
matrix. Let $\bV=\sum_{j=1}^m\bV_j$ with singular value
decomposition (SVD) $\bV=\bP_\bullet\bLambda_\bullet\bQ'$ where
$\bP_\bullet$ is a $p\times q$ matrix of orthogonal columns, $\bQ$
is a $q\times q$ orthogonal matrix and 
$\bLambda_\bullet$ is a $q\times q$ diagonal matrix with $v$ non-zero
entries where $v = \mbox{rank} (\bV)$. Then the $p\times q$ matrix $\bW =
\bP_\bullet\bQ'$ satisfies 
\begin{equation}
\bW = {\argmin}\{ \sum_{j=1}^{m}\norm{ \bW - \bV_j}_{F}^{2}: \bW'\bW
= \bI_q\}.
\label{eq:opt}
\end{equation}
\end{res}
 \begin{proof} (Several proofs for Result~\ref{res3}~\cite{golub1996}
   exist but we provide a novel alternative proof to add to the literature.)
 Minimizing $\sum_{j=1}^{m}\norm{ \bW - \bV_j}_{F}^{2}$ is the same as
 maximizing $\sum_{j=1}^{m}\trace{(\bW'\bV_j)}$ or, equivalently, 
 $\trace{(\bW'\bV)}$. Let the full SVD of $\bV
 =[\bP_\bullet,\bP_\circ][\bLambda_\bullet,\bzero]'\bQ'$, where
 the $i$th diagonal element of $\bLambda_\bullet$ is the nonnegative
 eigenvalue $\lambda_i$. Then $\trace{(\bW'\bV)} =
 \trace{(\bQ'\bW'[\bP_\bullet,\bP_\circ][\bLambda_\bullet,\bzero]')}
 $. Let $\bB = \bQ'\bW'[\bP_\bullet,\bP_\circ]$ have $b_{ij}$ as
 its $(i,j)$th entry. Then $\bB\bB' = \bI_q$ and $|b_{ij}|\leq1$ for
 all $i,j$. So,  
 $
 \trace{(\bW'\bV)}\leq\abs{\trace{(\bW'\bV)}}=\abs{\sum_{i=1}^v\lambda_{i}
 b_{ii}}\leq\sum\lambda_{i} \abs{b_{ii}}
 \leq\sum\lambda_{i} = \trace{(\bLambda_\bullet)}$,
 with equality holding when $\bW =
 \bP_\bullet\bQ'$.
 \end{proof}
Result~\ref{res3} reduces dimensionality of a dataset for when the number of
features is larger than the minimum number of records in any
group. We take $m = \min\{p,n_1,n_2,\ldots,n_g\}$. The $k$ MRPs of our
dataset are displayed using RadViz3D. The choice of $k$ may be based
on the clarity of the 
display, or by the cumulative proportion (we use 90\%)
of the eigenvalues  of $\bT^{-1/2}\bB\bT^{-1/2}$.  
\begin{rem} We compare MRP with orthogonal
  linear discriminant   analysis (OLDA)~\citep{ye05} and uncorrelated linear
  discriminant   analysis (ULDA)~\citep{jinetal01}. OLDA
  also produces orthogonal discriminant vectors that project data onto a
  lower-dimensional subspace, however the projection vectors satisfy
  $\bv_i'\bv_j = 0,\, i \neq j$. This does not necessarily mean
  uncorrelated projections of the data since $\bv_j'\bSigma\bv_i$ is not
  necessarily zero. In our  visualization, $v_i'\Sigma v_j =0,\, i
  \neq j$ is desired and MRP satisfies this by producing
  uncorrelated projection directions.  On the other hand, ULDA uses
  the same set of uncorrelated projection directions as MRP. However,
  MRP uses the normalized vectors (columns of projection matrix)
  with unit length, while the column vectors of the ULDA projection
  matrix are not normalized. Specifically, both ULDA and MRP have
  $v_i' \Sigma v_j = 0,\, i \neq j$, but in ULDA, $v_i' \Sigma v_i =
  1$, while in MRP, $v_i'v_i = 1$. So the ULDA is actually MRP with a
  scaling step after the projection, such that each coordinate's
  variance is 1. In that sense, MRP can display a better visualization
  as the variance of for each projection direction is preserved. 
\end{rem}
      Algorithm~\ref{alg1} summarizes the use of GDT and MRP
      for mixed-features datasets.
       \begin{algorithm}[H]
 \caption{RadViz3D for datasets with mixed features\label{alg1}}
 \begin{algorithmic}[1]
  \STATE Calculate the marginal ECDF $\hat{F}_1(\cdot), \hat{F}_2(\cdot), \ldots,
  \hat{F}_p(\cdot)$ for each of the $p$ coordinates of the dataset.
  \STATE Simulate $\bV_i \overset{iid}{\sim} \mathrm{Uniform}[0,1]^p$.
  \STATE Construct the transform $G(\cdot,\cdot)$ with  marginal ECDFs and
  simulated $\bV_i, i = 1,2, \ldots, n$, as in Equation~\ref{Gfunction}. 
  \STATE Transform $\bY_i$ in the discrete dataset to $\bX_i $ with $\bX_i = G(\bY_i, \bV_i),\, i = 1,2,\ldots, n$.
  \STATE Apply MRP on $\bX_i,\, i = 1,2,\ldots, n$ via Algorithm~\ref{alg1}.
  \STATE Display MRP results by RadViz3D.
 \end{algorithmic} 
 \end{algorithm} 

 \section{Illustrative Performance Evaluations}
 \label{sec:illustration}
We illustrate our methodology on 
simulated 100D datasets of $n=500$ observations from five groups with
both discrete and continuous features and of known group
separation and clustering complexity. The {\sc MixSim} 
package~\citep{melnykovetal12} in R\citep{R} allows for the simulation 
of (continuous) class data according to a pre-specified {\em
  generalized overlap}
($\ddot\omega$)~\citep{maitraandmelnykov10,melnykovandmaitra11} that
indexes clustering complexity, with very small values ($\ddot\omega =
0.001$) implying very good separation between groups and larger values
($\ddot\omega = 0.05$) indicating  poorer separation and increased
overlap. We discretize the first 50 coordinates in each group into 10
classes, based on the marginal deciles of the coordinate. 
We use the GDT and MRP on this mixed-features dataset and visualize using
RadViz3D. Because of how our mixed-features datasets were generated,
the {\sc MixSim}-estimated pairwise overlaps between the 5 groups are
essentially preserved (and displayed in 
the left columns of Fig.~\ref{fig:simcdr}). Two views of RadViz3D
displays of each dataset are in the middle two columns, and
a 3D UMAP~\citep{mcinnesetal18} visualization is in the right column
of Fig.~\ref{fig:simcdr}).  
 \begin{figure*}[h]
   \vspace{-0.2in}
   \mbox{
     \setcounter{subfigure}{-2}
     \subfloat[$\ddot\omega=0.001$]{
       \begin{minipage}[b][][t]{\textwidth} 
      \mbox{\subfloat{\includegraphics[width=.25\textwidth]{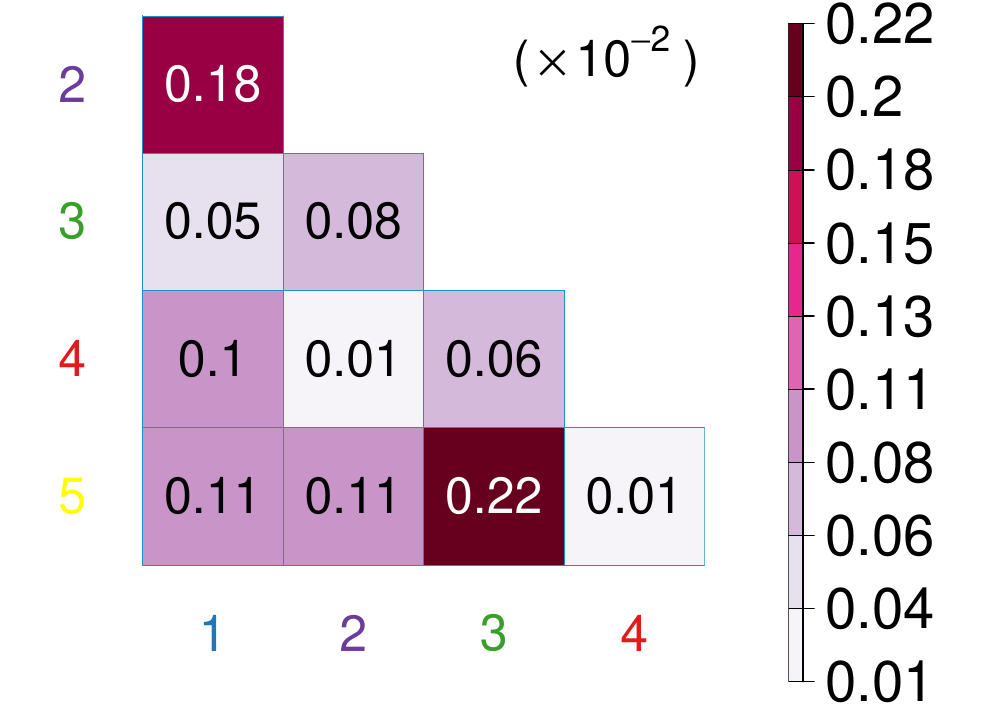}}
        \subfloat{\includegraphics[width=.25\textwidth]{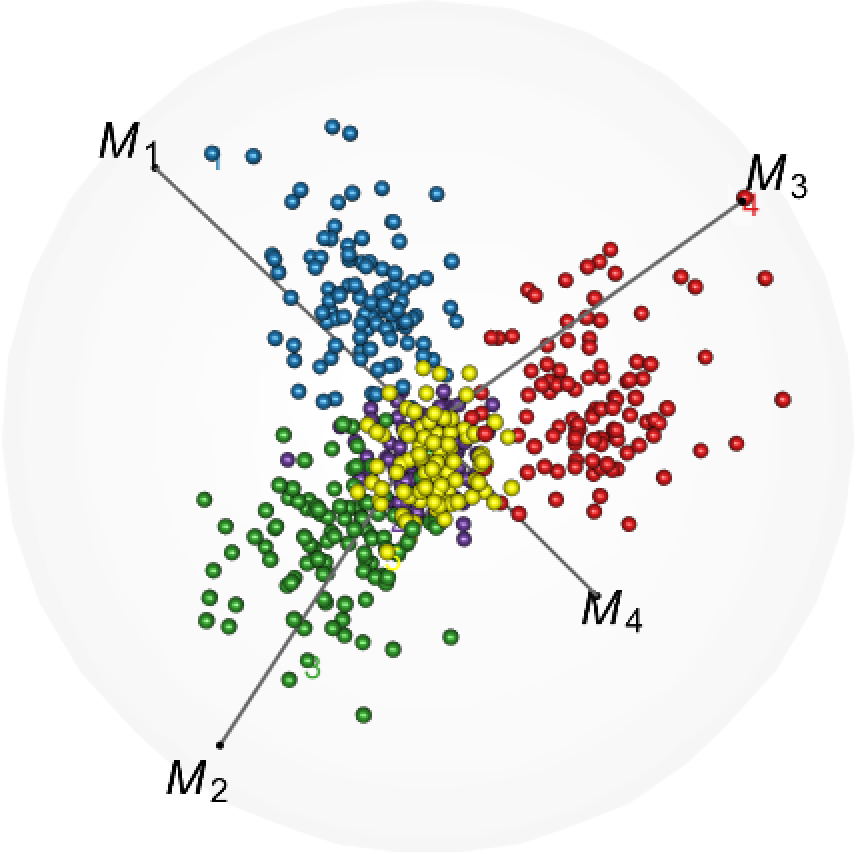}}
        \subfloat{\includegraphics[width=.25\textwidth]{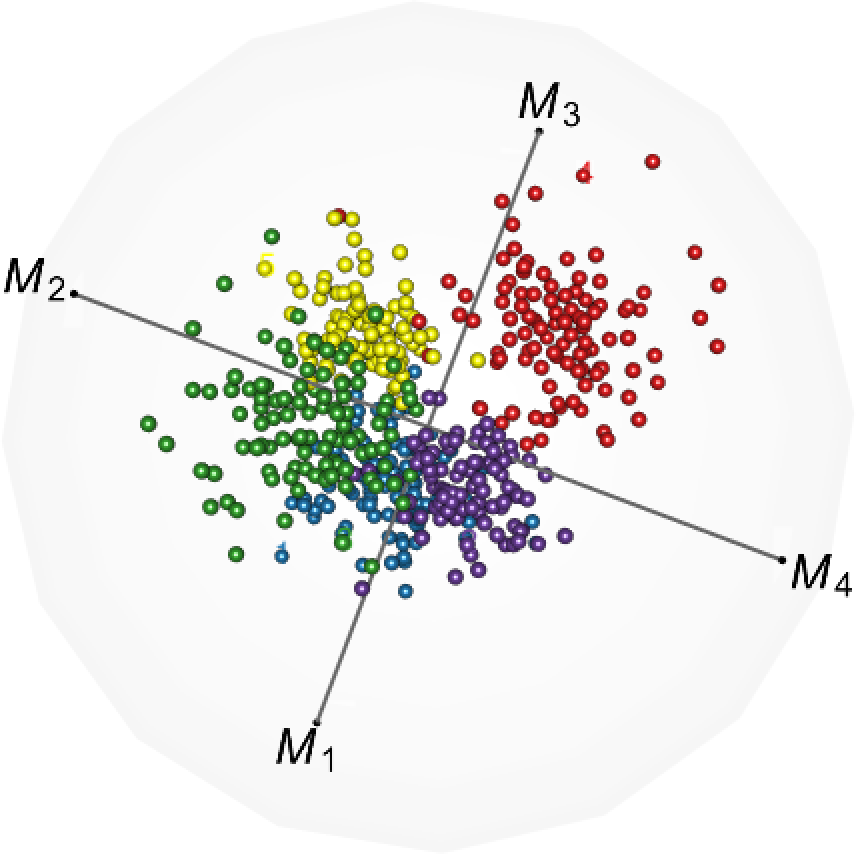}}
        \subfloat{\includegraphics[width=.25\textwidth]{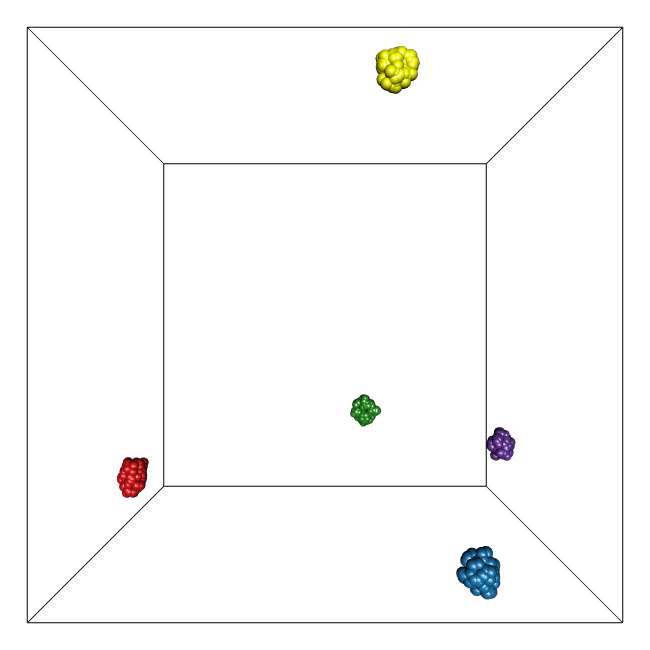}}
      }
\vspace{-0.1in}
      \end{minipage}}%
  }
   \mbox{
     \setcounter{subfigure}{-1}
     \subfloat[$\ddot\omega=0.01$]{
       \begin{minipage}[b][][t]{\textwidth} 
      \mbox{\subfloat{\includegraphics[width=.25\textwidth]{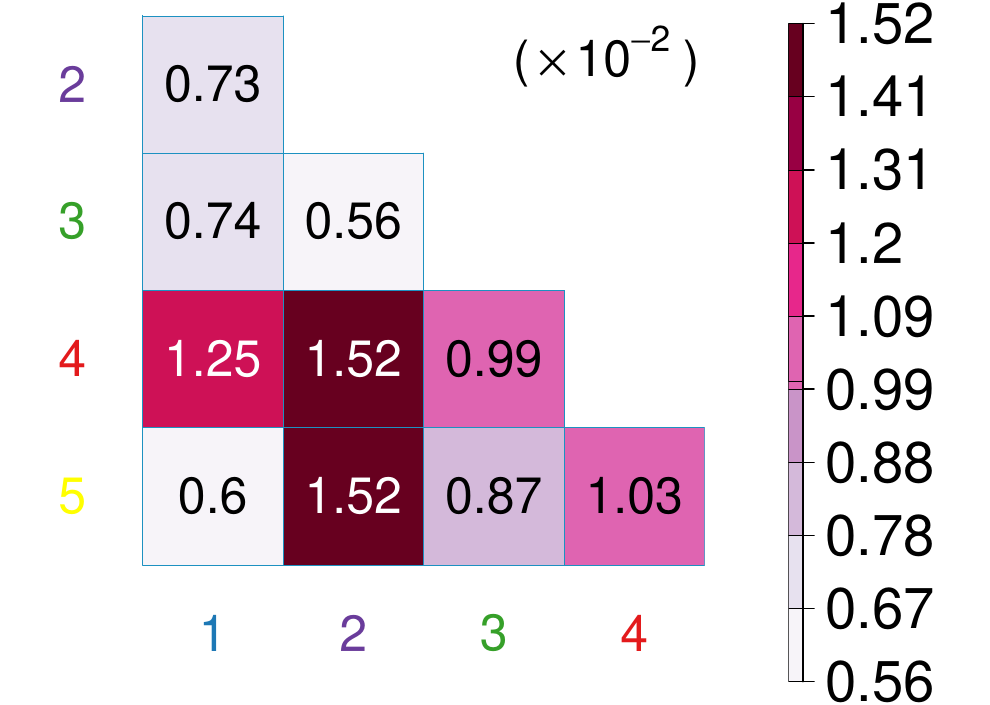}}
        \subfloat{\includegraphics[width=.25\textwidth]{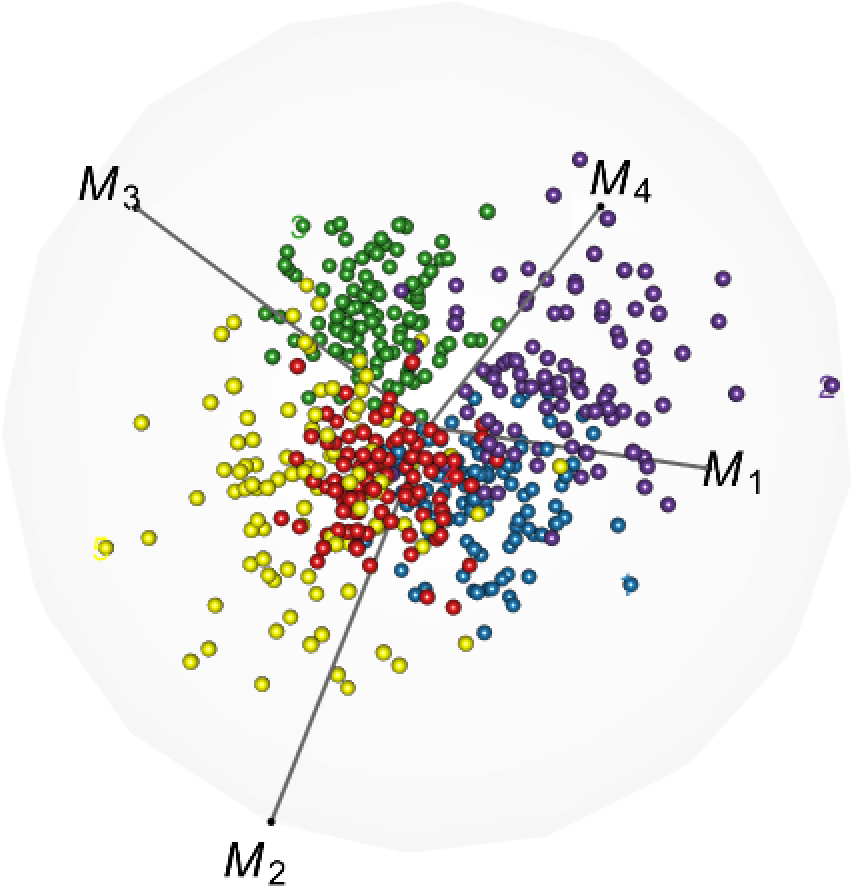}}
        \subfloat{\includegraphics[width=.25\textwidth]{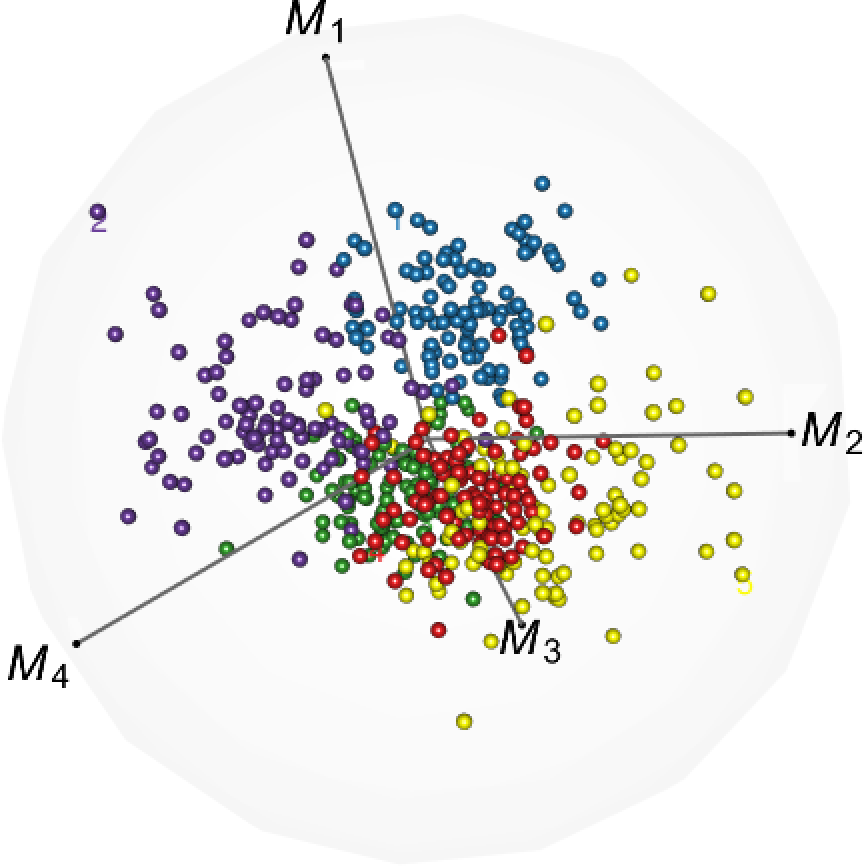}}
        \subfloat{\includegraphics[width=.25\textwidth]{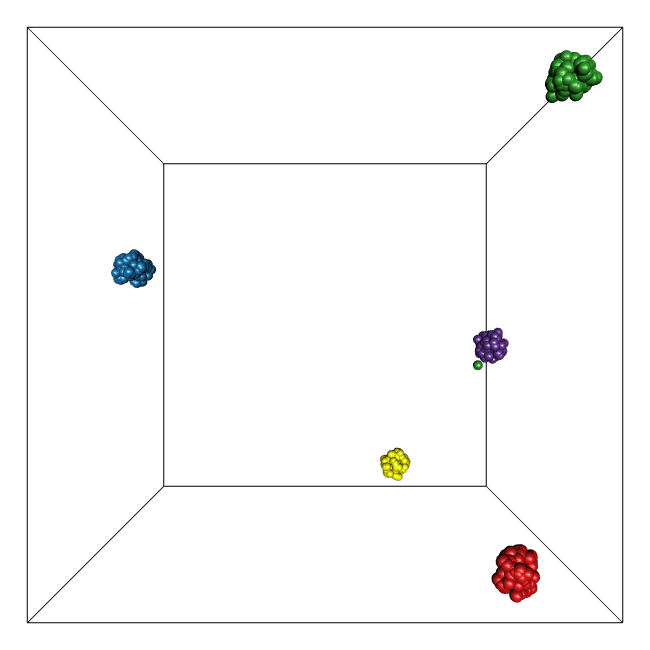}}
      }
\vspace{-0.1in}
      \end{minipage}}%
  }
   \mbox{
     \setcounter{subfigure}{-0}
     \subfloat[$\ddot\omega=0.05$]{
       \begin{minipage}[b][][t]{\textwidth} 
      \mbox{\subfloat{\includegraphics[width=.25\textwidth]{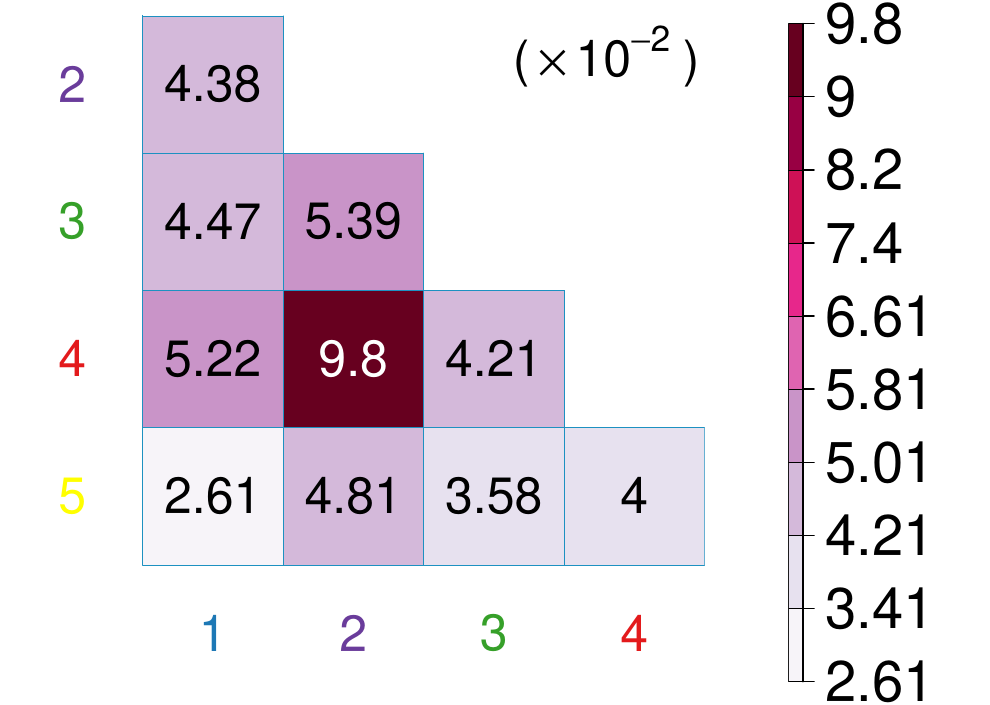}}
        \subfloat{\includegraphics[width=.25\textwidth]{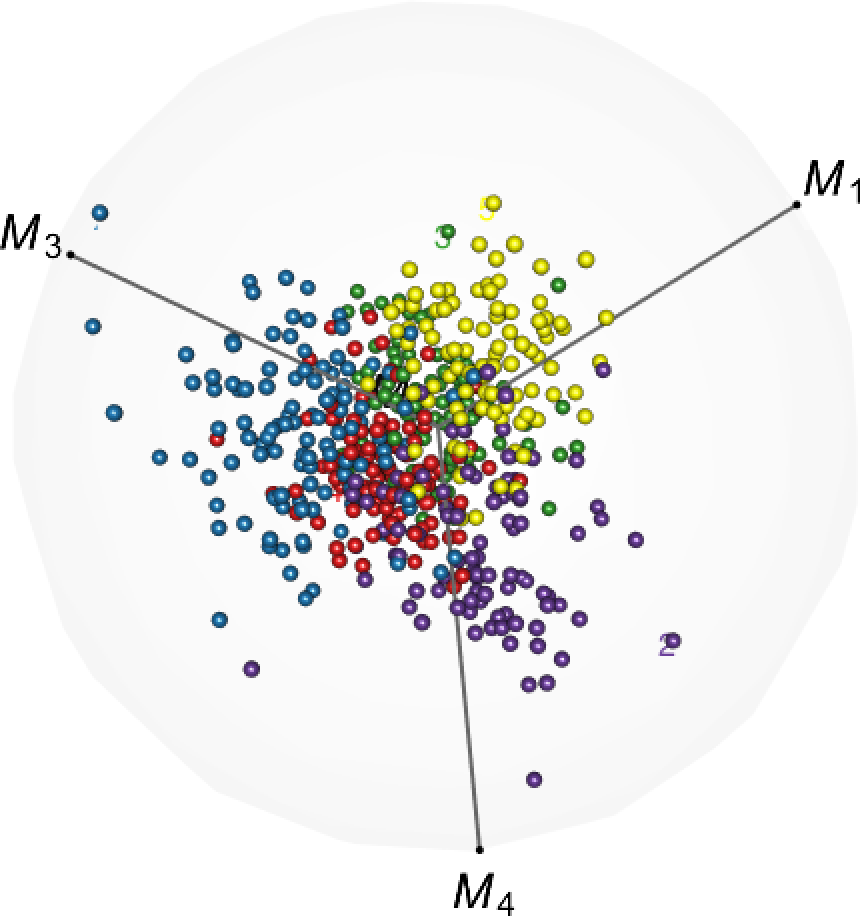}}
        \subfloat{\includegraphics[width=.25\textwidth]{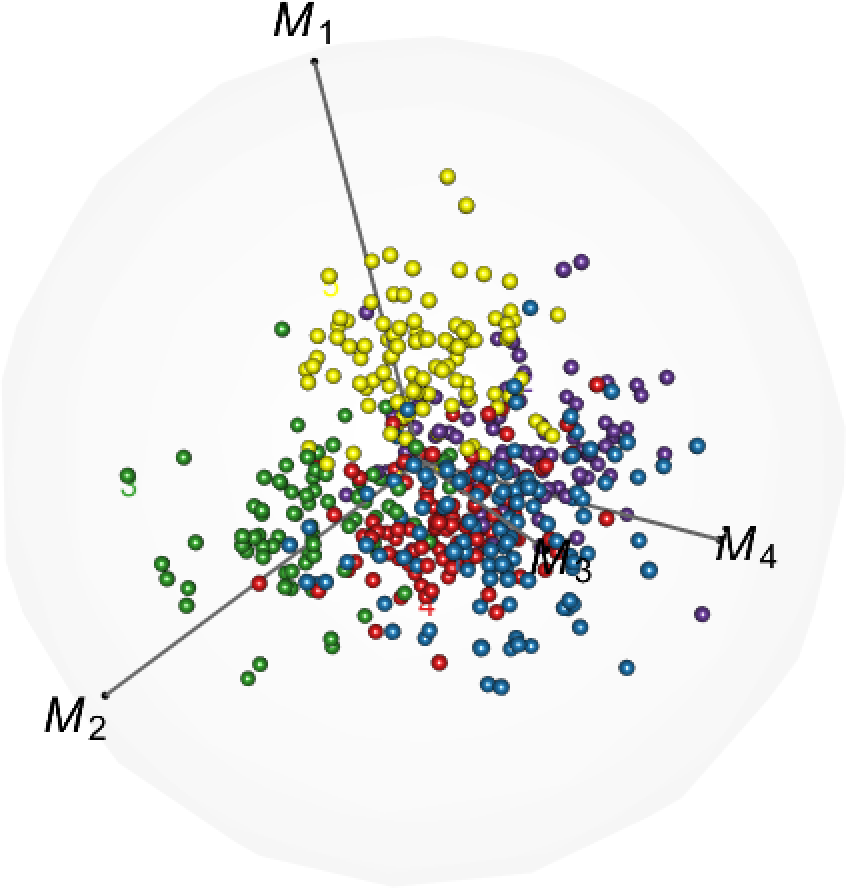}}
        \subfloat{\includegraphics[width=.25\textwidth]{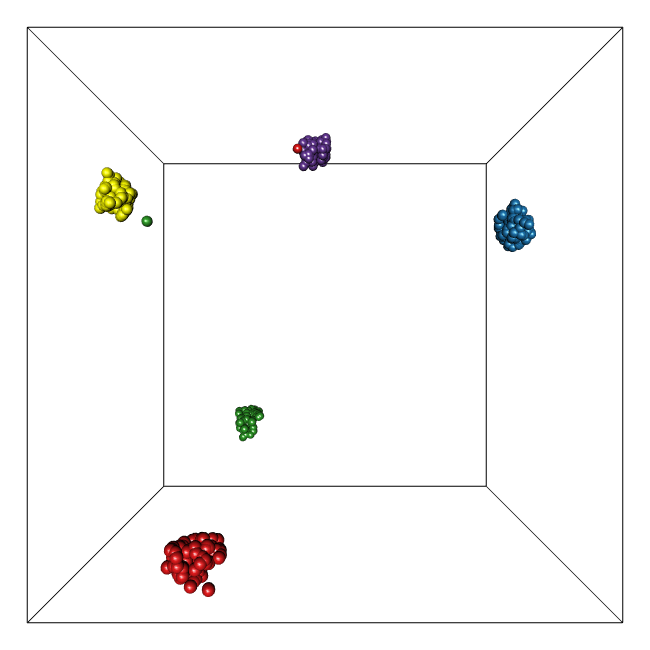}}
      }
\vspace{-0.1in}
      \end{minipage}}%
  }
\vspace{-.25cm}
\caption{Overlap maps ($\bOmega$, left) RadViz3D (middle figures) and UMAP displays (right) of simulated
  100D mixed-feature datasets of varying group separation ($\ddot
  \omega$). Label colors in $\bOmega$ match those in the RadViz3D displays.} 
\label{fig:simcdr}
\vspace{-.05cm}
\end{figure*}
Fig.~{S1} provides interactive 3D displays
of RadViz3D, UMAP, $t$-SNE, Viz3D, and star coordinate plots with OLDA and
ULDA. (OLDA,ULDA and MRP assume a low-dimensional linear subspace and require continuous variables while the default Euclidean distance used by UMAP and t-SNE is not always appropriate for mixed-features data. These methods are all also scale-variant. We use the GDT in mixed-features datasets -- itself a major contribution of the paper -- to address all these issues in all methods.) Further, in discussing the different displays, we note that the objective behind
accurate visualization of labeled data is the display of labels
according to the actual separation between them.

Fig.~\ref{fig:simcdr} and Fig. S1 show  that Radviz3D provides
meaningful displays that track the 
difficulty of separation very well as $\ddot\omega$ increases from
0.001 through 0.01 to 0.05 (see Fig.~S1 for dynamic displays for
$\ddot\omega\in\{0.0001,0.001,0.01,0.05, 0.25\}$. Further, Fig.~\ref{fig:simcdr}a shows the
highest overlap between Groups 3 and 5 and between Groups 1 and 2 is
fairly accurately reproduced in the RadViz3D display. Similar patterns are
also noticed in Figs.~\ref{fig:simcdr}b and \ref{fig:simcdr}c. For
instance, in Fig.~\ref{fig:simcdr}b, there are high overlaps between
Groups 2 and 4 and between Groups 2 and 5, and these high overlaps,
relative to the other pairs, is also reflected
in the RadViz3D display. 
From Fig.~\ref{fig:simcdr}c, we see that Groups 2 and 4 have the
highest overlap and again RadViz3D produces a display consistent with
this observation. Our illustrative experiment therefore shows
RadViz3D's ability to faithfully display high-dimensional grouped
datasets with varying separation.  
Our displays of the competing methods show that, but for UMAP, they
are unable to display the separation of classes even in the case when
they are generated with low $\ddot\omega$. UMAP (for instance,
Fig.~\ref{fig:simcdr} can separate out classes in all cases very well,
but can not distinguish the cases when class labels are well-separated
from the cases when they are not. Specifically, 
UMAP provides similar representation of the  distinctiveness of the
groups across all cases regardless of whether the groups are well- or
poorly-separated (as quantified by the overlap measures). 
We contend that this good separation of data by labels regardless of
their overlap or difficulty of separation is a desirable property  of 
a classification algorithm but not for visualization which should
faithfully render the correct status and should display well-separated
labeled data as such and poorer-separated labeled data as
such. In that goal, only RadViz3D provides meaningful displays. 

\section{Real-Data Examples}
\label{sec:app}
We now explore the visualization of datasets  with continuous, discrete
or mixed features. The focus of our work is on displaying
high-dimensional datasets, so we provide only one  moderate (9D)
example. Our other examples have larger $p$, in some cases of several
thousands. For brevity, we only have static displays here with RadViz3D,
and refer to the online resource 
for dynamic or competing displays. 
 \subsection{Datasets with discrete features}
 \label{app:discrete}
 We begin with datasets with discrete features. There exist
 no competing methods that can handle such datasets so we use the GDT 
 on these datasets before using them. The MRP is also used before
 displaying with RadViz3D, star coordinates and Viz3D displays.
 \subsubsection{Voting records of US senators}
 \label{sec:senators}
 \begin{figure}[h]
   \vspace{-0.2in}
   \centering
 \mbox{
 \subfloat[]{\label{senators:radviz3d1}\includegraphics[width=0.25\textwidth]{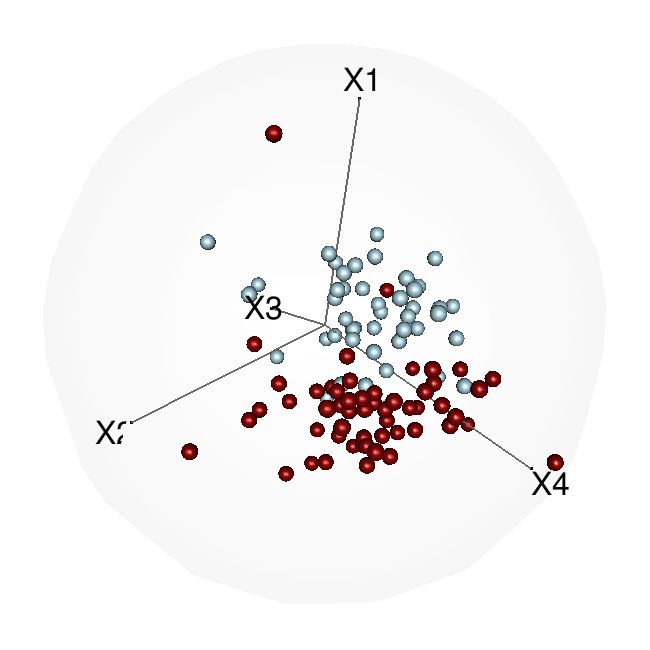}}
 \subfloat[]{\label{senators:radviz3d2}\includegraphics[width=0.25\textwidth]{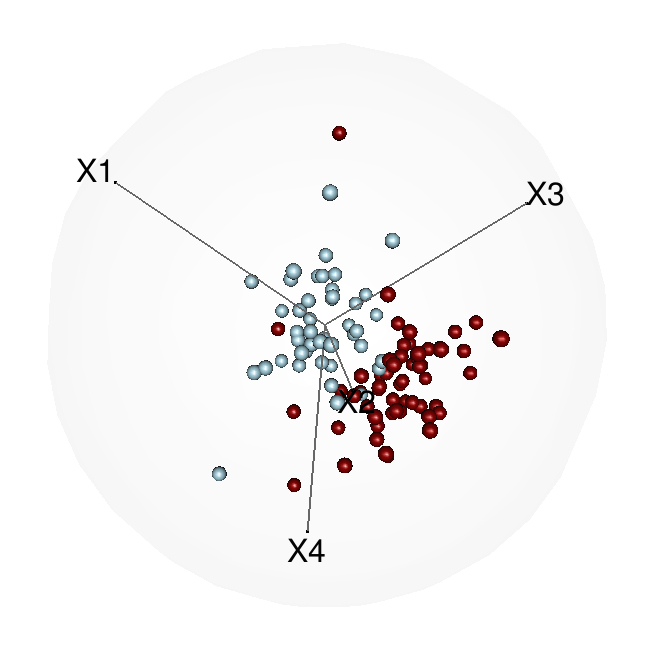}}
 }
 \par\bigskip
 \centering
 \text{
 \tikz\draw[black,fill=dkp1] (0,0) circle (.5ex); Democratic 
 \tikz\draw[black,fill=dkp2] (0,0) circle (.5ex); Republican
 }
 \caption{RadViz3D displays of senators' voting records.}
 \label{senators}
 \vspace{-0.1in}
 \end{figure}
 The 108th US Congress had 55 Republican and 45 Democratic (including
 1 independent in the Democratic caucus) senators vote on 542
 bills~\citep{banerjeeetal08}. We display the senators according to whether they voted for each bill or not ({\em i.e.}
 against/abstained). 
 The RadViz3D~(Fig.~\ref{senators}, Fig.~S6f) display 
 distinguishes the 2 groups, reflecting the political affiliation. 
 Here $G\!\!=\!\!2$, so three
 zero-eigenvalue projections beyond the MRP are used in the RadViz3D
 display. As explained in Section~\ref{sec:dir.MRP}, these additional
 projections (associated with the anchor  points $X_2,X_3,X_4$) do not
 contribute towards separating the two groups 
 which are separated solely by the first MRP (associated
 with $X_1$). A physical interpretation is that the spring on anchor
 point $X_1$ pulls one group harder than another group, separating it
 out, while the  ``null'' springs on $X_2, X_3, X_4$ pull both groups with equal
 force. All anchor points are evenly-distributed on the unit sphere,
       so the forces applied by $X_2, X_3, X_4$ cancel and only the
       spring attached to $X_1$ separates the first  group from the
       second in the visualization.
       RadViz3D and UMAP separate out
       the two groups, but RadViz3D, unlike UMAP (Fig.~S6b), also
       gives us a sense of the        closeness of some senators in
       either  group with the         other.  
The $t$-SNE~(Fig.~S6a), Viz3D~(Fig.~S6e) and especially star
coordinate plots with ULDA (Fig.~S6c) and OLDA (Fig.~S6d) are unable
to separate the two parties. 

 \subsubsection{Adult Autism Spectrum Disorder (ASD) screening}
 This dataset~\citep{thabtah17} from the UCI's Machine
 Learning Repository (MLR)~\citep{newmanetal98} has 15 binary (and 5
 additional) features on 515 normal and 189 ASD-diagnosed
 adults. 
 \begin{figure}[h]
   \centering
 \mbox{
 \subfloat[]{\includegraphics[width=0.25\textwidth]{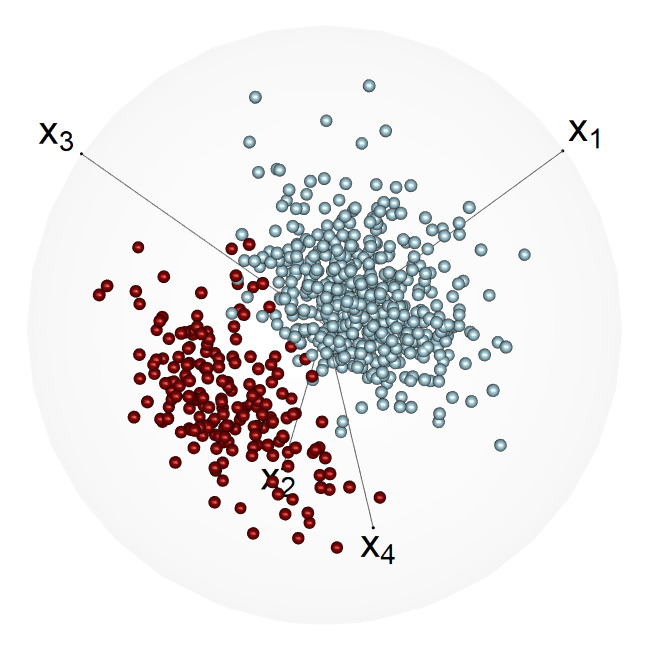}}
  \subfloat[]{\includegraphics[width=0.25\textwidth]{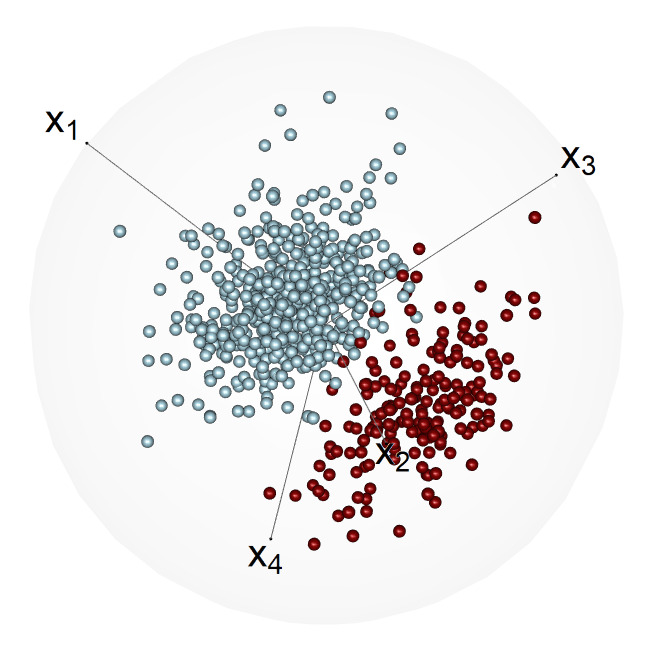}}
}
 \par\bigskip
 \centering
 \text{
 \tikz\draw[black,fill=dkp1] (0,0) circle (.5ex); Normal 
 \tikz\draw[black,fill=dkp2] (0,0) circle (.5ex); ASD subject
 }
 \caption{RadViz3D displays of the ASD screening dataset.\vspace{-1em}}
 \label{autism}
\end{figure}
The two groups are reasonably well-separated, with minimum
classification error of 1.7\%~\citep{rajandmasood20}, and therefore
should be easily separated.  RadViz3D (Fig.~\ref{autism} and Fig.~S7f)
performs better in separating two groups than $t$-SNE (Fig.~S7a),
star coordinates with OLDA (Fig.~S7c) or ULDA (Fig.~S7d), and Viz3D
(Fig.~S7e). UMAP (Fig.~S7b) also separates the two groups, but they
are completely disjoint and does not accurately reflect the best
misclassification rate of 1.7\% for this dataset. 

 
 \subsubsection{SPECT heart dataset}

 \begin{figure}[h]
   \centering
 \mbox{
 \subfloat[]{\includegraphics[width=0.25\textwidth]{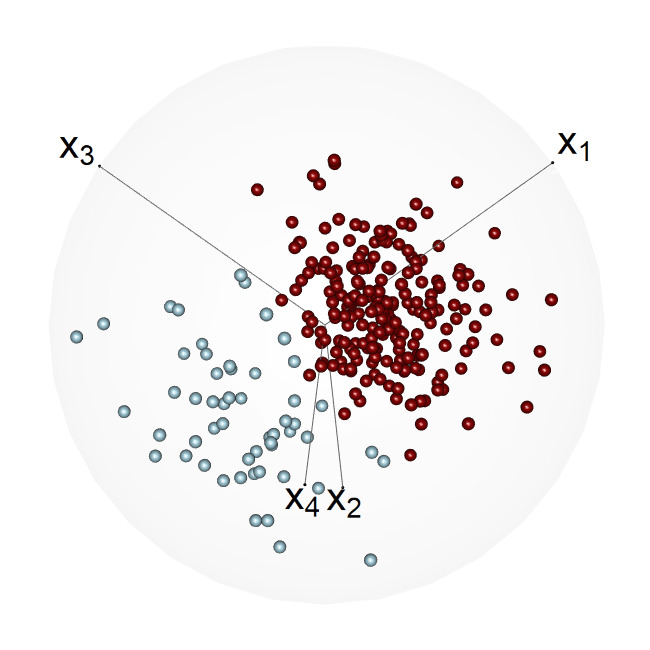}}
 \subfloat[]{\includegraphics[width=0.25\textwidth]{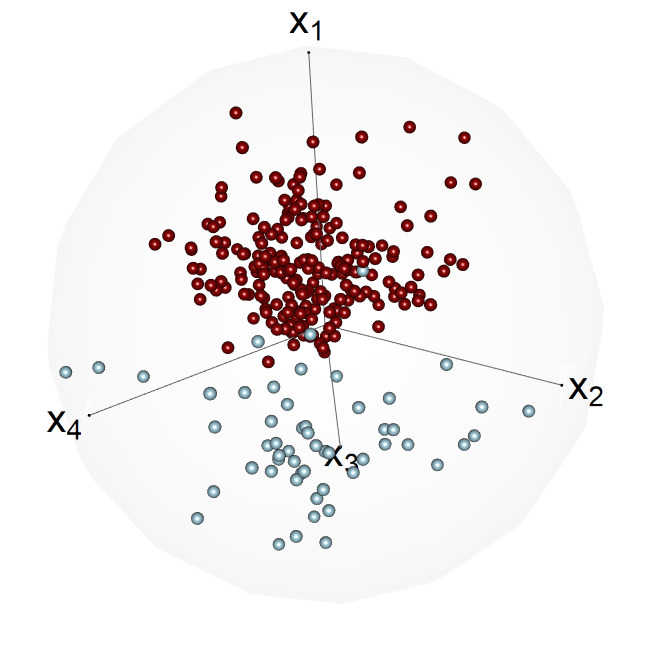}}
 }
 \par\bigskip
 \centering
 \text{
 \tikz\draw[black,fill=dkp1] (0,0) circle (.5ex); Normal
 \tikz\draw[black,fill=dkp2] (0,0) circle (.5ex); Abnormal
 }
 \caption{RadViz3D displays of the SPECT Heart dataset.}
 \label{spect}
   \vspace{-0.1in}
 \end{figure}
  This dataset~\cite{kurganetal01} from 
 the UCIMLR~\citep{newmanetal98}
 has 22 binary 
  attributes that summarize cardiac Single Proton Emission
 Computed Tomography (SPECT) images of 55 normal and 212 abnormal
 patients. 
Each image was summarized by means of 44 continuous features that was
further processed to obtain 22 binary
features~\citep{kurganetal01}. 
The separation between these two groups is very good with current
classification results~\citep{yadavetal20} finding small but positive
misclassification rates. 
In this example, RadViz3D (Fig.~\ref{spect} and Fig.~S8f) performs
well in separating the two groups with very small overlap. UMAP
(Fig.~S7b) again produces disjoint groups in the visualization and
fails to accurately reflect the non-zero misclassification rate
between the two groups while $t$-SNE (Fig.~S8a), Viz3D (Fig.~S8e) and
especially star coordinate plots  with
OLDA (Fig.~S8c) or ULDA (Fig.~S8d) fail to clearly display the
separability of the two groups. 


 
 \subsection{Datasets with mixed features}
 \label{app:mixed}
We now illustrate performance of RadViz3D on two real datasets with
continuous and discrete-valued features. No method can currently display
such datasets, so as in Section~\ref{app:discrete}, we employ the GDT
 (and the MRP for RadViz3D and Viz3D) on the datasets before their display. 
\subsubsection{Indic scripts}
This dataset~\citep{obaidullahetal18} is on 116 different
features from handwritten scripts of 11 Indic languages. We choose a
subset of 5 languages from 4 regions, namely Bangla
(from the east),   Gurmukhi (north), Gujarati (west), Kannada and
Malayalam (languages from the neighboring southern states of Karnataka and
 Kerala) and a sixth language (Urdu, with a distinct Persian
 script). 
\begin{figure}[h]
  \centering
  \mbox{\subfloat{\includegraphics[width=0.5\textwidth]{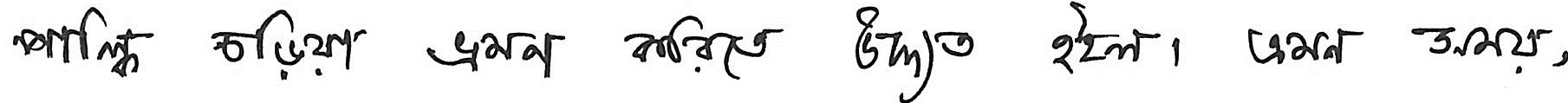}}}
  \mbox{\subfloat{\includegraphics[width=0.5\textwidth]{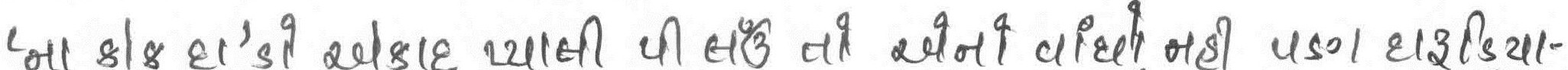}}}
      \mbox{\subfloat{\includegraphics[width=0.5\textwidth]{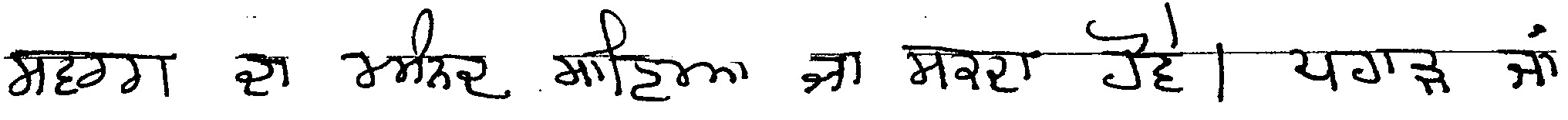}}}
  \mbox{\subfloat{\includegraphics[width=0.5\textwidth]{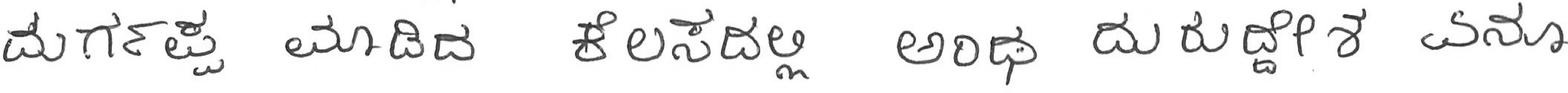}}}
  \mbox{\subfloat{\includegraphics[width=0.5\textwidth]{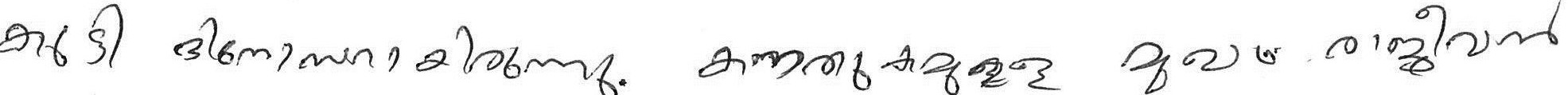}}}
  \setcounter{subfigure}{0}
\vspace{-0.1in}
  \mbox{\subfloat[Handwriting samples from (top to bottom) Bangla,
    Gujarati, Gurmukhi, Kannada, Malayalam and Urdu]{\label{scripts}\includegraphics[width=0.5\textwidth]{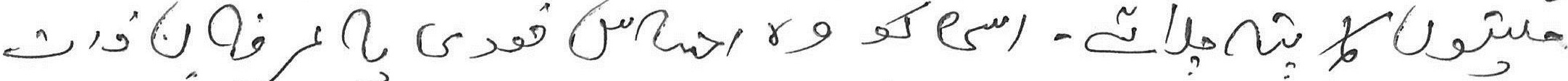}}}
  \mbox{
\subfloat[]{\label{script_cd:radviz3d}\includegraphics[width=0.25\textwidth]{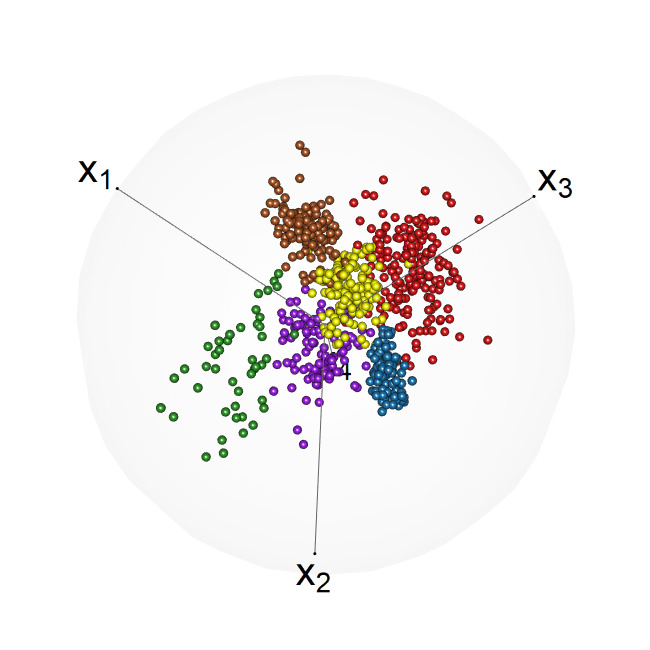}}
\subfloat[]{\label{script_cd:radviz3d-2}\includegraphics[width=0.25\textwidth]{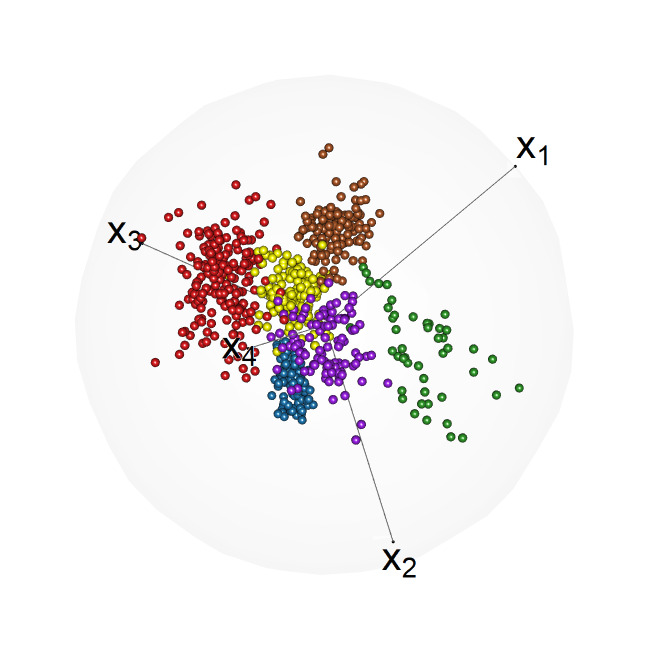}
}}
\par\bigskip
\centering
{\small
  \text{
    \hspace{-1.5ex}    
    \tikz\draw[black,fill=indic1] (0,0) circle (.5ex); Bangla
    \tikz\draw[black,fill=indic2] (0,0) circle (.5ex); Gujarati
    \tikz\draw[black,fill=indic3] (0,0) circle (.5ex); Gurmukhi
    \tikz\draw[black,fill=indic4] (0,0) circle (.5ex); Kannada
    \tikz\draw[black,fill=indic5] (0,0) circle (.5ex); Malayalam
    \tikz\draw[black,fill=indic6] (0,0) circle (.5ex); Urdu
}}
\caption{Indic scripts: (a) Samples and (b,c) RadViz3D displays.}
\label{script_cd}
\end{figure}
Figure.~\ref{scripts} displays a 
line from a sample document in each script and illustrates the
challenges in characterizing handwritten scripts because of the
additional effect of individual handwriting styles. 
The challenges of accounting for handwriting variability, and the 
distinctiveness of the six scripts are captured well in the RadViz3D displays
(Figs.~\ref{script_cd:radviz3d},~\ref{script_cd:radviz3d-2} and S9f). 
For example, we see that Kannada and Malayalam
are close by in the displays. The three Sanskrit-based languages
of Bangla, Gujarati and Gurmukhi are neighbors of each other in the
displays. The placement of Urdu farther from the rest but still close
to both Gujarati and Bangla indicates the possible influence of Persian, for
reasons of history and geography, on the  handwriting of these
scripts. The RadViz3D displays therefore make intuitive sense for this dataset.
The $t$-SNE~(Fig.~S9a) and UMAP~(Fig.~S9b) displays distinguish the
scripts very well but do not depict similarities between
them. Indeed, the UMAP display separates even individuals from
within the scripts, and in our view, performs poorly. Viz3D~(Fig.~S9e)
distinguishes Urdu, Kannada and Gujarati very well but not the
 other languages while star coordinate plots~(Figs.~S9c,d) with OLDA
 and ULDA do poorly.  
\subsubsection{RNA sequences of human tissues}
This dataset \citep{wangetal18} consists of gene expression levels, in FPKM
(Fragments per Kilobase of transcripts per Million), of RNA sequences
from 13 human organs. from which we choose the eight largest (in terms
of available samples) organs, -- esophagus (659
samples), colon (339), thyroid (318), lung (313), breast (212),
stomach (159), liver (115) and prostate (106) -- for our
illustration. This dataset has $p\!\!=\!\!20242$ discrete features,
however some of them have so many discrete values, and the
probabilities for these discrete values are almost equal and very
small. So, these features have approximately continuous CDFs and can
be considered to be continuous. 
Thus, this dataset essentially has both discrete and continuous attributes. 
\begin{figure}[h]
	\vspace{-1.5em}
\mbox{
  \subfloat[]{\includegraphics[width=0.25\textwidth]{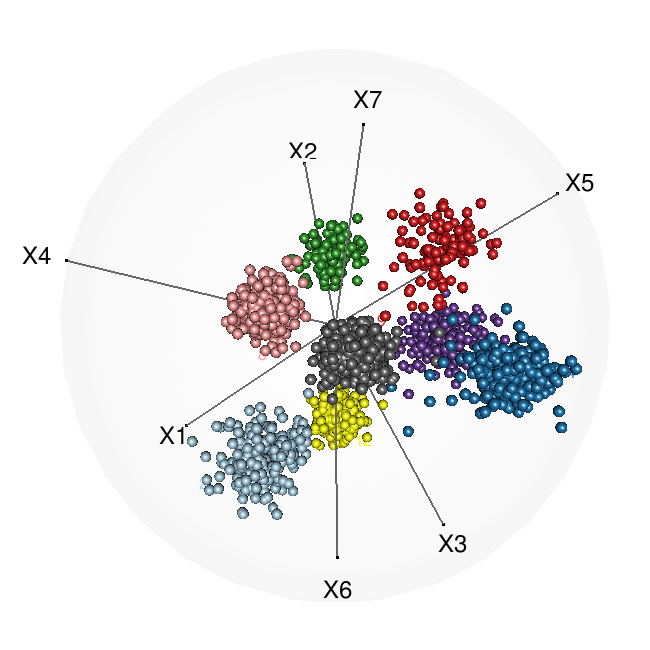}}
  \subfloat[]{\includegraphics[width=0.25\textwidth]{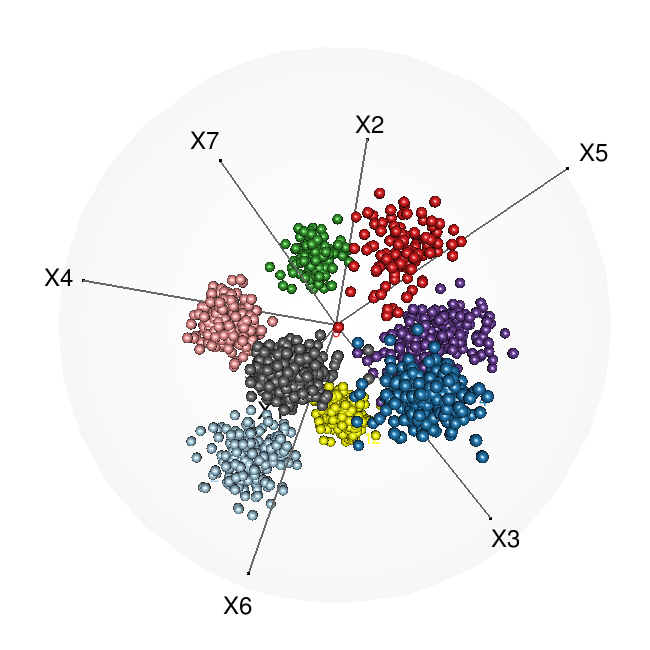}}
}
\par\bigskip
\text{
\tikz\draw[black,fill=gtex1] (0,0) circle (.5ex); Breast
\tikz\draw[black,fill=gtex2] (0,0) circle (.5ex); Colon
\tikz\draw[black,fill=gtex3] (0,0) circle (.5ex); Esophagus
\tikz\draw[black,fill=gtex4] (0,0) circle (.5ex); Liver
\tikz\draw[black,fill=gtex5] (0,0) circle (.5ex); Lung
\tikz\draw[black,fill=gtex6] (0,0) circle (.5ex); Prostate}
\text{
\tikz\draw[black,fill=gtex7] (0,0) circle (.5ex); Stomach
\tikz\draw[black,fill=gtex8] (0,0) circle (.5ex); Thyrioid
}
\caption{RadViz3D displays of the RNA-seq dataset.}
\label{gtex}
\end{figure}
The RadViz3D(Figs.~\ref{gtex} and~S10f) display shows very clear
separation between almost all the organs, except for the colon and the
stomach that have marginal overlap while $t$-SNE~(Fig.~S10a) and
UMAP~(Fig.~S10b) show unclear distinction of  the organs and are
unsatisfactory. Viz3D~(Fig.~S10e) and star coordinate plots~(Figs.~S10c,d) with OLDA
and ULDA also perform worse than RadViz3D.


\section{Discussion}
\label{sec:conclusion}
We develop methods for the display of labeled
high-dimensional observations. We develop the MRP to
summarize a dataset before display, such that the projected data have
uncorrelated coordinates while also separating the different groups. A
third contribution of this paper is to display data with mixed
features so we propose using the GDT to transform features to have
standard normal marginals, while preserving the correlation structure
using copulas.  Labeled datasets with discrete-valued, mixed or heavily-skewed
attributes are transformed, after removing redundant features, to the 
continuous space using the GDT, following which they are displayed using 
the MRP and RadViz3D. Our methodology displays both 
the distinctiveness and complexity of labeled data well. 
The GDT and the MRP are 
general transformation and data reduction methods that we show can
also be used with other visualization techniques.  
An R package {\tt radviz3d} 
implementing our methodology is publicly available at
\url{https://github.com/fanne-stat/radviz3d/}. 

Some aspects of our development could benefit from further
attention. For instance, the MRP is a linear projection method that is 
designed to maximize separation between labeled data. It would be
interesting to see if non-linear projections can improve
results. Also, for unlabeled data, we suggest the use of PCs in place
of the MRP, but other transformations may provide better
visualizations. Also, the GDT is 
inapplicable to datasets with features that have more than two nominal
categories. It would be important to develop 
methodology  for such datasets. Thus, we see that
while this paper has made important 
contributions in visualizing mixed-attribute datasets,  there remain
issues that merit additional investigation.
 \section*{Acknowledgments}
 The authors thank  S. Dutta for helpful
  discussions. A portion of this manuscript won the first author a 2021 Student 
Paper Competition award from the American Statistical Association
(ASA) Section on Statistical Graphics. The third author's  research was
supported in  part by the United States Department
  of Agriculture (USDA)/National Institute of Food and
  Agriculture (NIFA), Hatch project IOW03617. The content of this paper however is
  solely the responsibility of the  authors and does not represent the
  official views of the USDA. 
\ifCLASSOPTIONcaptionsoff
  \newpage
\fi
\bibliographystyle{IEEEtran}
\bibliography{references}



%




\end{document}


\setcounter{page}{7}
\title{\large Appendix to ``Three-dimensional Radial Visualization of
  High-dimensional Datasets with Mixed Features'' by
  Yifan Zhu, Fan Dai and Ranjan Maitra}




\renewcommand\thefigure{A-\arabic{figure}}


\maketitle
\subsection*{Summary of other visualization methods mentioned in the Introduction}
Here, as suggested by a reviewer, we briefly describe the other common visualization methods, using the gamma ray bursts dataset of Section 4.1.1 for
illustration. 
\subsubsection*{\underline{Starplots}}
Starplots~\citep{chambersetal83}, also called {\em radarplots} or {\em spiderplots}, are
meant to display individual observations. The base starplot is
suitable for non-negative measurement, so all variables in the data
are standardized and shifted so that the minimum values are set at
zero. We construct one star for each observation in the data with $p$
variables. So, a circle of fixed radius with $p$ equally spaced rays
representing $p$ variables is created, and the value of each variable
(after the standardization and shifting) is represented by the length
of the corresponding ray. Connecting the ends of these rays will give
a star-like shape. Each star corresponds to an observation so this
plot can only be used for a handful of observations at a time. In these
cases, starplots are used to find observations with similar
features. However, even with more than a few observations, 
starplots become impractical to either apply or interpret. We
illustrate the starplot on the GRB data. Because the dataset has 1599
complete observations, we are unable to display it using
starplots. So, instead we display the five group means using starplots
in Figure~\ref{fig:grbstar}.
\begin{figure}[!h]
    \vspace{-3in}
    \centering
    \input{appendix/stars}
    \vspace{-1in}
    \caption{Starplots for the group means of the GRB dataset. The key
      indicating the variable denoted in each ray is provided in the second row.} 
    \label{fig:grbstar}
\end{figure}

\subsubsection*{\underline{Chernoff faces}}
Chernoff faces~\citep{chernoff73} have the same general idea as starplots in the sense
that it uses a face to represent a single observation. Instead of rays
in the starplot, Chernoff faces  uses facial characteristics (length
of nose, position of mouth, etc.) to represent the different
features. We can also distinguish and compare individual observations using
faces. However, similar to starplots, Chernoff faces is not
\begin{figure}[h]
\centering
    \vspace{-0.75in}
    \input{appendix/chernoff}
    \vspace{-4.75in}
    \caption{Chernoff faces of the group means for the GRB data. The
      following facial characteristics correspond to the variables:
      (1) height of face:  $T_{50}$, (2) width of face: $T_{90}$,
      (3) structure of face: $F_1$, (4) height of mouth: $F_2$, (5)
      width of mouth: $F_3$, (6) smiling: $F_4$, (7) height of eyes:
      $P_{64}$, (8) width of eyes: $P_{256}$, (9) height of hair:
      $P_{1024}$, (10) width of hair: $T_{50}$, (11) style of hair:
      $T_{90}$, (12) height of nose: $F_1$, (13) width of nose: $F_2$,
      (14) width of ear: $F_3$, (15) height of ear: $F_4$.}
    \label{fig:grbface}
\end{figure}
appropriate for more than a few (say 10) observations, and therefore
is impractical for datasets such as the GRB. We illustrate Chernoff
faces on the group means of the GRB dataset in Figure \ref{fig:grbface}.
Chernoff faces. 
\subsubsection*{\underline{Parallel coordinate plot}}
The parallel coordinate plot~\citep{inselberg85,wegman90} represents
multidimensional data using a polyline for each observation. The data
are scaled so that the range 
of each dimension is from 0 to 1. For a dataset with $p$ variables and
$n$ observations, $p$ vertical axes are placed in parallel at equal
distances. For one observation, we connect $p-1$ lines the between the
$p$ vertical axes, with the ends of lines representing the scaled
value a variable. This leads a polyline for each observation. Similar
observations will show a similar pattern in the polylines in the
visualization. The order in which the variables are displayed affects
the parallel coordinate plot. Further, it is difficult to display data
with many features, and is also not easy to distinguish patterns with
many observations, as seen from Figure \ref{fig:grbparallel}
which displays the GRB data in a parallel coordinate plot.
\begin{figure}[h]
\centering
    \includegraphics[width=\textwidth]{appendix/grb_parallel.png}
    \caption{Parallel coordinate plot of the group means for the GRB data}
    \label{fig:grbparallel}
\end{figure}

\subsubsection*{\underline{Surveyplot}}
A surveyplot~\citep{fayyadetal01} is a simple technique of extending a
line graph (like a bar plot, each observation representing a bar) to
multiple side-by-side line graphs. For a data with $n$ observations
and $p$ variables, a line graph is created for each variable, and $n$
lines were placed parallelly with the lengths representing the value
of the variable and the positions determined by the observation's
index in the dataset. In the end, we get $p$ side-by-side line
graphs. If we sort the data according to a particular variable and
look at the classes, and then cycle through the variables, we can find
the variable that is most associated with the labels. We can also find
the associations between the values in the different variables by
comparing the values of the ordered feature with that of the other (unordered)
features.  For instance,  Figure~\ref{fig:grbsurvey} displays the
surveyplot, according to the ordering of $F_1$ which has the highest
association with the class labels. We see some relationship between
$F_1$ and $F_2$ and to a lesser extent $F_3$. Beyond these
observations, the value of a surveyplot appears limited. 
\begin{figure}[h]
\centering
\input{appendix/survey}
    \caption{Survey plot of the group means for the GRB data}
    \label{fig:grbsurvey}
\end{figure}

\subsubsection*{\underline{Andrew's Curves}}
Andrew's curves~\citep{andrews72,khattreeandnaik02} display each $p$-dimensional observation $\bx = (x_1, x_2, \ldots, x_p)'$ as a curve using the function
\[f(t) = x_1 + x_2 \sin t + x_3 \cos t + x_4 \sin 2t + x_5 \cos 2t + \ldots\].
The function is usually plotted in the interval $-\pi < t >
\pi$. There are four types of such plots~\citep{khattreeandnaik02}
with the exact type often obtained by trying out the different
displays. An advantage if this method is that it can represent many
dimensions. However, it takes long computing times to do the
calculations and display for high-dimensional datasets. Also,  there
is no interpretability in the figures. Further, class separability is
determined by considering the curves in their entirety and this can be
cumbersome to visualize, as seen in Figure \ref{fig:grbandrews} which
displays Andrews' curves of type 2 for GRB dataset.
\begin{figure}[h]
  \centering
  \includegraphics[width=\textwidth]{appendix/andrews-crop}
    \caption{Andrews' curve (Type 2) for the GRB dataset.}
    \label{fig:grbandrews}
\end{figure}

\subsubsection*{\underline{Biplot}} A biplot~\citep{gabriel71} is
constructed from a singular value decomposition (SVD) of the centered
data matrix $\bX$ to obtain its low-rank approximation. Suppose that
we have a centered $n \times p$ data matrix $X$ (the means of $p$
variables are 0), we first obtain its SVD decomposition:
\[\bX = \sum_{k=1}^p d_k \bu_k \bv_k'.\]

Then two scatterplots are created with the same set of axes. The first is for rows (observations), and each point is:
\[(d_{1}^\alpha u_{1i}, d_{2}^{\alpha}u_{2i}), \, i = 1, 2, \ldots, n.\]

The second is for columns (variables), and each point is:
\[(d_{1}^{1 - \alpha} v_{1j}, d_2^{1 - \alpha} v_{2j}),\, j = 1, 2, \ldots, p.\]
Usually, we use $\alpha = 0$ or $1$. A biplot are essentially displays the first
two principal components of the centered data and as such is an
unsupervised method.  The biplot is suitable if high-dimensional data
can be represented well by its first two principal components, and in
the case of labeled data, if the major proportion of the total
variance in the data is driven by the group differences. Figure
\ref{fig:grbbiplot} illustrates the biplot of GRB data and shows
unclear separation between the groups. It shows that two PCs are
likely not adequate to represent the differences in the data. 
\begin{figure}[h]
\centering
\includegraphics[width=\textwidth]{appendix/biplot-crop}
\caption{Biplot of the GRB dataset.}
    \label{fig:grbbiplot}
\end{figure}

\subsubsection*{\underline{Star Coordinates plot}}
The fundamental idea governing a star coordinates
plot~\citep{kandogan01} is to arrange the $p$ axes (for the $p$
features) on a 2D plane, where the coordinate axes are not necessary
orthogonal to each other. The data are transformed with a min-max
transformation so that all variables have a range from 0 to 1. Then
the $p$ variables in each observation are converted to 2D unit vectors (usually equi-spaced on the unit circle) and the linear combination of these unit vectors is used to represent observations. 
Let $\bx_1, \bx_2, \ldots, \bx_n$ be $p$-dimensional observations after the min-max transformation, and $\bu_1, \bu_2, \ldots, \bu_p$ be the unit vectors for $p$ variables, we represent $\bx_i$ by
\[\bp_i = \sum_{j=1}^p x_{ij} \bu_j.\]
Usually, we use $\bu_j = (\cos (2\pi (j-1)/p), \sin (2\pi (j-1)/p)).$

Star coordinate plots struggle with high-dimensional data since the
$p$ transformed axes on the 2D plane get harder to separate with
larger $p$. The results can also lack interpretability as seen in
Figure \ref{fig:grbstarcoord} that shows the GRB data by means of a
star coordinate plot. We see that the groups are not very easily
distinguished. 
\begin{figure}[h]
\centering
\input{appendix/starcoord}
\caption{Star coordinate plot of the GRB dataset.}
    \label{fig:grbstarcoord}
\end{figure}

\subsubsection*{\underline{Uniform Manifold Approximations and
    Projections (UMAP)}}
UMAP~\citep{mcinnesetal18} is a nonlinear dimension reduction technique
that can be used to visualize high-dimensional data. The data are
assumed to be uniformly distributed on a Riemannian manifold that can
be modeled with a fuzzy topological structure. Then UMAP finds a
lower-dimensional representation of the data that has the closest
equivalent fuzzy topological structure.
For visualization, it makes sense to choose two or three
projections. However, it 
is hard to compare the similarity between different groups of data by
UMAP. We note that UMAP is really a
classification tool and is geared towards finding the best
classification rule. As such it is unable to correctly characterize
the difficulty of separating out classes and making distinctions given
its sole focus on classification.  For example, the GRB dataset is
illustrated by means of a 3D UMAP in Figure~S2b. We see that the five
groups are very well-separated with no hint of the controversy between
2, 3 or 5 groups as described in Section 4.1.1 of the paper. 












\ifCLASSOPTIONcaptionsoff
  \newpage
\fi
\bibliographystyle{IEEEtran}
\bibliography{references}